\theoremstyle{plain}
\newtheorem{thm}{Theorem}[section]
\newtheorem{cor}[thm]{Corollary}
\newtheorem{lem}[thm]{Lemma}
\newcommand{\eqq}[1]{\begin{equation} #1 \end{equation}}
\newcommand{\alge}[1]{\begin{align*} #1 \end{align*}}
\newcommand{\algg}[1]{\begin{align} #1 \end{align}}
\newcommand{\C}{\mathcal{C}}
\newcommand{\Chi}{\mathcal{X}}
\newcommand{\trn}{\scriptscriptstyle\mathrm{T}}
\newcommand{\Zcal}{\mathcal{Z}}
\newcommand{\Pcal}{\mathcal{P}}
\newcommand{\Ecal}{\mathcal{E}}
\newcommand{\PP}{\mathbb{P}}
\newcommand{\E}{\mbox{E}}
\newcommand{\PS}{\mbox{P}}
\newcommand{\me}{\mathcal{E}}
\newcommand{\ppp}{|\!|\!|}
\newcommand{\ddd}{d}
\newcommand{\sprs}{\eta}
\newcommand{\maxcon}{\nu_1}
\newcommand{\maxinf}{\nu}
\newcommand{\maxcorrlone}{\omega}
\newcommand{\maxcorr}{\omega_0}
\newcommand{\inftau}{\rho}
\newcommand{\Rbb}{\mathbb{R}}
\newcommand{\perr}[1]{p_{err,\,#1}}
\newcommand{\perrp}[1]{\tilde p_{err,\,#1}}
\newcommand{\trt}{\tilde r_2}
\newcommand{\fit}{\hat U}
\begin{document}

\title{Variable Selection in High Dimensions with Random Designs and Orthogonal Matching Pursuit}
\author{Antony Joseph\thanks{Department of Statistics, Yale University , New Haven, CT 06520 USA, e-mail : antony.joseph@yale.edu}}
\date{August 29, 2011}
\maketitle


\begin{abstract}
The performance of Orthogonal Matching Pursuit (OMP) for variable selection is analyzed for random designs. When contrasted with the deterministic case, since the performance is here measured after averaging over the distribution of the design matrix, one can have far less stringent sparsity constraints on the coefficient vector.  We demonstrate that for exact sparse vectors, the performance of the OMP is similar to known results on the Lasso algorithm [\textit{IEEE Trans. Inform. Theory} \textbf{55} (2009) 2183-–2202]. Moreover, variable selection under a more relaxed sparsity assumption on the coefficient vector, whereby one has only control on the $\ell_1$ norm of the smaller coefficients, is also analyzed. As a consequence of these results, we also show that the coefficient estimate satisfies strong oracle type inequalities.

\end{abstract}

\section{Introduction}\label{sec:intro}
Consider linear regression model,
\eqq{Y = X\beta + \epsilon \label{eqone}}
where $X \in \Rbb^{n\times p}$, the coefficient vector $\beta \in \Rbb^p$ and noise $\epsilon \in \Rbb^n$.
The high dimensional case, where $p$ is of the same order, or possibly much larger than $n$, has been of immense interest nowadays. In many applications, interest is not primarily on prediction of the response $Y$, but on the accuracy of estimation of the coefficient $\beta$. Examples of such applications include, micro-array data analysis, graphical model selection \cite{meinshausen2006high}, compressed sensing \cite{donoho2006compressed}, \cite{candes2006near}, and in communications \cite{barron2010joseph},\cite{barron2010ajoseph},\cite{tropp2006just}. As is well known, in the high dimensional setting,  $\beta$  is unidentifiable unless the design matrix $X$ is well-structured and there is some sparsity constraint on the coefficient vector $\beta$. This sparsity assumption corresponds to restricting $\beta$ to few non-zero entries ($\ell_0$-sparsity), or more generally, assuming that  $\beta$ has only few terms that are large in magnitude.

The Orthogonal Matching Pursuit \cite{pati1993orthogonal} is a variant of the Matching Pursuit algorithm \cite{zhang1993matching}, where, successive fits are computed through the least squares projection of $Y$ on the current set of selected terms.
For deterministic $X$ matrices, variable selection properties of this algorithm, for $\ell_0$-sparse vectors, have been analyzed for the noisy case in \citet{zhang2009consistency} and \citet{cai2010orthogonal}. However, as we shall review Subsection \ref{subsec:relatedwork}, although they give strong performance guarantees under certain conditions on the X matrix, they impose severe constraints on the sparsity of $\beta$. 
Similar results have been shown for the Lasso, for example in \citet{zhao2006model}.
With random designs one can have reliable detection of the support with far less stringent sparsity constraints; the performance is here measured after averaging over the distribution of $X$. 
For example, \citet{wainwright2009sharp} proved such results for the Lasso algorithm. The main results of this paper, apart from showing that similar properties hold for the OMP, demonstrate two important additional properties. Firstly, we give results on partial support recovery, which is important since exact recovery of support places strong requirements on $n$ if some of the non-zero elements are small in magnitude. Secondly, and more importantly, we relax the assumption that $\beta$ is $\ell_0$-sparse and address variable selection under a more general notion of sparsity, whereby one has only control on the $\ell_1$ norm of the smaller elements of $\beta$.  We demonstrate that even under this more relaxed assumption, one can reliably estimate the position of the larger entries using the OMP. This has certain parallels with recent work on the Lasso by \citet{zhang2008sparsity}. As a consequence of these results, we show that our coefficient estimate, after running the algorithm, satisfies strong oracle inequalities, similar to that demonstrated for the Lasso \cite{zhang2009some} and Dantzig selector \cite{candes2007dantzig}.

The paper is organized as follows. Below, we describe the OMP algorithm. The stopping criterion we use is slightly different from what is traditionally used in literature.  Subsection \ref{subsec:relatedwork} motivates in greater detail our interest in random designs. In Subsection \ref{subsec:subgaus} we give results for design matrices that have i.i.d sub-Gaussian entries and $\ell_0$-sparse vectors. This extends the results in \citet{tropp2007signal} for the noisy case. In Subsection \ref{subsec:gauss} we describe more general results with correlated Gaussian designs, where we only have control over the $\ell_1$ norm of the smaller coefficients. Sections \ref{sec:proofsubgauss}, \ref{sec:proofguass} and \ref{pf:sec:proofgauss} gives proofs of our main results. The appendices contains auxiliary results.

\subsection{The Orthogonal Matching Pursuit algorithm} \label{subsec:oga}

Denote as $J = J_1 = \{1,\, 2,\, \ldots,\, p\}$ to be the set of indices corresponding to columns in the $X$ matrix.
For each step $i$, with $i \geq 1$, a single index $a(i)$ is detected to be non-zero in that step. Accordingly, denoting $d(i) = a(1) \cup a(2) \ldots \cup a(i)$ as the set of detected columns after $i$ steps, step $i + 1$ of the algorithm only operates on the columns in $J_{i+1} = J - d(i)$, that is, the columns not detected in the previous steps. In other words, indices detected in previous steps remain detected.

The decision on whether a particular index $j$ is detected during a particular step $i$ is based on the absolute value of a statistic $\Zcal_{ij}$. Here, $\Zcal_{ij}$ is simply the inner product between $X_j$ and the normalized residual $ R_{i-1}$ computed for the previous step.

Apart from the response vector $Y$ and design matrix $X$, the other input to the algorithm is a positive threshold value $\tau$. Denote $\|.\|$ as the euclidean norm. We now describe the OMP algorithm.

\begin{itemize}
\item Initialize $ R_0 = Y,\, d(0) = \emptyset$. Start with step $i = 1$.
\item Update
$$\Zcal_{ij} = X_j^{\trn} \frac{ R_{i-1}}{\| R_{i-1}\|}, \quad \mbox{for} \quad j \in J_i.$$

\item If $\max_{j \in J_i}|\Zcal_{ij}| > \tau$, do the following:
\begin{itemize}
\item Assign
$a(i) = \arg\max \{|\Zcal_{i,j}| : j \in J_i\}.$
\item Set $d(i) = d(i-1) \cup a(i)$. Update $ R_i = (I - \Pcal_{i})Y$, where $\Pcal_{i}$ is the projection matrix for the column space of $X_{d(i)}$, and set $J_{i+1} = J_i - a(i)$.
\item Increase $i$ by one and go to step 2.
\end{itemize}
\item Stop if $\max_{j \in J_i} |\Zcal_{ij}| \leq \tau$.

\end{itemize}

We remark that for any step $i$, the inner product $X_j^{\trn}R_{i-1}$, for $j \in d(i-1)$, is 0. Correspondingly, since $\Zcal_{ij} = 0$, for $j \in d(i-1)$, the maximum of $\Zcal_{ij}$ over $j \in J_i$, is the same as the maximum over all $j \in J$. Also, the newly selected term $a(i)$ may be equivalently expressed as,
$$a(i) = \arg\min_{j \in J} \inf_{w \in \Rbb} \|Y - Fit_{i-1} - w X_j\|^2,$$
where $Fit_{i-1}$ is the least squares fit of $Y$ on the columns in $d(i-1)$. In this respect, the OMP is similar to other greedy algorithms such as relaxed greedy and forward-stepwise algorithms (\cite{barron2008approximation}, \cite{huang2008risk}, \cite{jones1992simple}, \cite{lee1996efficient}), that operate through successive reduction in the approximation error.

 As mentioned earlier, the stopping criterion considered here is slightly different from that considered in literature.
Traditionally, for the no noise setting, the algorithm is run until there is a perfect fit between $Y$ and the selected terms, that is $R_i =0$ (see for example \cite{tropp2004greed}, \cite{tropp2007signal}). In the noisy case, as analyzed over here, there are two standard approaches. The first, as done in \cite{cai2010orthogonal}, \cite{zhang2009consistency}, is to stop when $\max_{j \in J}|X_j^{\trn}R_{i-1}|$ is less than some fixed threshold. The second approach, as analyzed in \cite{donoho2006stable}, \cite{cai2010orthogonal}, is to stop when $\|R_{i}\|$ is less than some pre-specified value.

Our stopping criterion, which is more similar to the first approach, is equivalent to continuing the algorithm until $\max_{j \in J}|X_j^{\trn}R_{i-1}| \leq \tau \|R_{i-1}\|$. The motivation for the use of such a statistic comes from the analysis of a similar iterative algorithm in \citet{barron2010ajoseph} for a communications setting. However, there the values of the non-zero $\beta_j$'s were known in advance; this added information played an important role in the analysis of the algorithm. A similar statistic was used by \citet{fletcherorthogonal} for an asymptotic analysis of the OMP for exact support recovery using i.i.d designs.

 \textit{Notation}: Let $a = a(n,\,p,\,  k),\, b = b(n,\, p,\,  k)$ be two positive functions of $n,\, p$ and $k$. We denote as $a = O(b)$, if $a \leq c_1 b$ for some constant positive constant $c_1$ that is independent of $n,\, p$ or $k$. Similarly, $a = \Omega(b)$ means $a \geq c_2 b$ for positive $c_2$ independent of $n,\, p$ or $k$.

\subsection{Related work}\label{subsec:relatedwork}

As mentioned earlier, we are interested in  variable selection in the high dimensional setting. Apart from iterative schemes, another popular approach is the convex relaxation scheme  Lasso \cite{tibshirani1996regression}.
In order to motivate our interest in random design matrices, we describe existing results on variable selection, using both methods, with deterministic as well as random design matrices. For convenience, we concentrate on implications of these results assuming the simplest sparsity constraint on $\beta$, namely that $\beta$ has only a few non-zero entries.

In particular, we assume that,
\eqq{|S_0(\beta)| = k, \quad \mbox{where\, $S_0(\beta) = \{j : \beta_j \neq 0\}$.}\label{snotbeta}}
In other words, attention is restricted to all $k$-sparse vectors, that is, those that have exactly $k$ non-zero entries. 
For convenience, we drop the dependence on $\beta$ and denote $S_0(\beta)$ as  $S_0$ whenever there is no ambiguity.
The simplest goal then is to recover $S_0$ exactly, under the additional assumption that all $\beta_j$, for $j \in S_0$, have magnitude at least $\beta_{min}$, where $\beta_{min} >0$. Denote as $\C \equiv \C(\beta_{min},\, k)$, as the set of coefficient vectors satisfying this assumption.

Further, denote $\hat S$  as the estimate of $S_0$ obtained  using either method, and  $\Ecal =\{\hat S \neq S_0\}$  the error event that one is not able to recover the support exactly. For deterministic $X$, interest is mainly on conditions on $X$  so that
\eqq{P_{err,\, X} = \sup_{\beta \in \C }\PP_{\beta}\left(\Ecal |X\right)\label{uniformrecov}}
can be made arbitrarily small when $n,\, p,$ or $k$ become large. Here $\PP_{\beta}(.|X)$ denotes the distribution of $Y$ for the given $X$ and $\beta$.


A common sufficient condition on $X$ for this type of recovery is the \textit{mutual incoherence condition}, which requires that the the inner product between distinct columns be small.   In particular, letting $\|X_j\|^2/n = 1$, for all $j \in J$, it is assumed that \eqq{\gamma(X) = \frac{1}{n}\max_{j \neq j'}\left|X_j^{\trn}X_{j'}\right| \label{incoherence}}  is $O(1/k)$. Another related criterion is the \textit{irrepresentable criterion} \cite{tropp2004greed}, \cite{zhao2006model}, which assumes, for all subset $T$ of size $k$, that
 \eqq{\|(X_{T}^{\trn}X_{T})^{-1}X_{T}^{\trn}X_j\|_1 < 1, \quad\mbox{for all} \quad j \in J - T \label{irrepresentablecondpop}.}
Here $\|.\|_1$ denotes the $\ell_1$ norm.

Observe that if $P_{err, X}$ (\ref{uniformrecov}) is small, it
 gives  strong guarantees on support recovery, since it ensures that any $\beta$, with $|S_0(\beta)| = k$, can be recovered with high probability. 
 However, it imposes severe constraints on the $X$ matrix. As as example, when the entries of $X$ are i.i.d Gaussian, the coherence $\gamma(X)$ is around $\sqrt{2\log p/n}$. Correspondingly, for (\ref{incoherence}) to hold, $n$ needs to be $\Omega(k^2\log p)$. In other words, the sparsity $k$ should be $O(\sqrt{n/\log p})$, which is rather strong since ideally one would like $k$ to be of the same order as $n$. Similar requirements are needed for the irrepresentable condition to hold. 
Recovery using the irrepresentable condition has been shown for Lasso in \cite{zhao2006model}, \cite{wainwright2009sharp}, and for the OMP in \cite{zhang2009consistency}, \cite{cai2010orthogonal}.
Indeed, it has been observed, in \cite{zhao2006model} for the Lasso, and in \cite{zhang2009consistency}, for the OMP, that a similar such condition is also necessary if one wanted exact recovery of the support, while keeping $P_{err,\, X}$ small.


A natural question is to ask  about requirements on $X$ to ensure recovery in an average sense, as opposed to the strong sense described above. One way to proceed, as done over here, is to consider random $X$ matrices and ask about the requirements on $n,\, p,\, k$, as well as $\beta_{min}$, so that
\eqq{P_{err} = \sup_{\beta \in \C}\PP_{\beta}\left(\Ecal \right)\label{avgXsense}}
is small. Here $\PP_{\beta}\left(\Ecal \right) = E_X \PP_{\beta}\left(\Ecal |X\right)$, where the expectation on the right is over the distribution of $X$. For the Lasso, \citet{wainwright2009sharp} considers random $X$ matrices, with rows drawn i.i.d $N_p(0, \Sigma)$. 
 It is shown that under certain conditions on $\Sigma$, which can be described as population counterparts of the conditions for deterministic $X$'s, one can recover $S_0$ with high probability with $n = \Omega(k\log p)$ observations, with the constant depending inversely on $\beta_{min}^2$. The form of $n$ is in a sense ideal since now $k = O(n/\log p)$ is nearly the same $n$, if we ignore the $\log p$ factor. 
 As mentioned earlier, apart from establishing similar properties to hold for the OMP with $k$-sparse vectors, we also demonstrate strong support recovery results under a more general notion of sparsity. These results are described in the next section.

We also note that instead of averaging over $X$, one could assume a distribution on $\beta$ and analyze the average probability of $\Ecal$ over this distribution. This is done in \citet{candès2009near} for the Lasso. Here, for fixed magnitudes of the $k$ non-zero $\beta$, the support of $\beta$ is uniformly assigned over all possible subsets of size $k$. Once the support is chosen, the signs for the non-zero $\beta_j$'s  are assigned $\pm 1$ with equal probability. 
If $\mbox{Avg}[.]$ denotes the expectation with this distribution of $\beta$, it is shown that
one could keep $\mbox{Avg} \left[\PP_{\beta}\left(\Ecal |X\right)\right]$ low  for $\gamma(X)$ as high as $O(1/\log p)$. This condition on $\gamma(X)$ is less stringent than before and leads to a demonstration that $n =\Omega(k\log p)$ is sufficient for support recovery, provided $\ppp X\ppp \approx \sqrt{p/n}$, where $\ppp .\ppp$ denotes the spectral norm. 
We provide comparisons with this work in Section \ref{sec:conclusion}.


\textit{Notation:} For a set $\mathcal{A} \subseteq J$, we denote as $X_{\mathcal{A}}$  the sub-matrix of $X$ comprising of columns with indices in $\mathcal{A}$. Similarly, for any $p \times 1$ vector $ \beta$, we denote as $\beta_{\mathcal{A}}$ the $|\mathcal{A}|\times 1$ sub-vector with indices in $\mathcal{A}$. Also let $\mathcal{A}^c = J - \mathcal{A}$.

\section{Results}\label{sec:results}

Before discussing our main results with Gaussian matrices, in Subsection \ref{subsec:subgaus} we state results when the entries of $X$ are i.i.d sub-Gaussian and when the vector $\beta$ has $k$ non-zero entries. 
The noise vector is also assumed to come from a sub-Gaussian distribution with scale $\sigma$. 
This generalizes the results of  \citet{tropp2007signal} for the noisy case. While preparing this manuscript we discovered that \citet{fletcherorthogonal} have analyzed the OMP for i.i.d designs and for $k$-sparse vectors, similar to that in Subsection \ref{subsec:subgaus}. However, there the analysis was for exact support recovery and was asymptotic in nature. Further, they focused on a specific regime, where $k\beta_{min}^2/\sigma^2$ tends to infinity. We provide more comparisons with this work later on in the paper.

We show that $n =\Omega(k\log p)$ samples are sufficient for the recovery of any coefficient vector with $\beta_{min}$ that is at least the same order as the \textit{noise level}. More specifically, 
define
\eqq{\mu_n = \sqrt{(2\log p)/n}\label{mundef}.}
The quantity $\sigma\mu_n$ can thought of as the noise level. To see why this is so, consider the orthogonal design where $X^{\trn}X/n = I$ and noise $\epsilon \sim N(0, \sigma^2 I)$. Assume that, as usual, we are interested in recovering any $\beta$ with $|S_0(\beta)| = k$. A natural estimate of the support would be,
\eqq{\hat S = \{j : |z_j| > t\} \quad\mbox{with}\quad z_j = X_j^{\trn}Y/n,\label{zjhats}}
where $t$ is positive. Notice that $z_j \sim N(\beta_j, \sigma^2/n)$ for each $j \in J$. Correspondingly, since $z_j \sim N(0, \sigma^2/n)$, for $j \in  J - S_0$, one sees that $t$ has to be of the form $\sigma\mu_n$ in order to prevent false discoveries with high probability. Similarly $\beta_j$, for all $j \in S_0$, has to have magnitude at least $\sigma\mu_n$ if one wanted to avoid false negatives.

The analysis of iid designs, as done in Subsection \ref{subsec:subgaus}, forms an important ingredient to compressed sensing \cite{candes2006near}, \cite{donoho2006compressed}. However, it may not be useful for statistical applications, where typically the choice of the $X$ matrix is not under ones control. Accordingly, in Subsection \ref{subsec:gauss}, we assume that the rows of $X$ are drawn i.i.d from $N_p(0, \Sigma)$, with certain assumptions on $\Sigma$. This model was also employed to detect the neighborhood of a node in high dimensional graphs by \citet{meinshausen2006high}.
Moreover, we relax the assumption that $\beta$ is $k$-sparse and only assume that there is a set $S = S(\beta)$, of size $k$, such that $\beta_{S^c} $ is sparse in a more general sense. Here $\beta_{S^c}$ denotes the vector of coefficients outside of $S$. 
More specifically, for a constant $\maxinf \geq 0$, if
\eqq{ S = \left\{j : |\beta_j| > \sigma\maxinf\mu_n\right\},\quad\mbox{with}\quad |S| = k\label{sassump},}
we assume
\eqq{\|\beta_{S^c}\|_1 \leq \sigma\sprs\mu_n \label{betascl1},}
for an appropriately chosen $\sprs$. A natural choice would be to take $\maxinf = 1$. Then, $S$ would correspond to the indices above the noise level. We show that for $\eta$ not too large, the OMP can detect the large indices in $S$ with high probability, provided $\Sigma$ satisfies certain conditions.
 As a consequence of these results, we show that the coefficient estimate satisfies strong oracle inequalities.


\subsection{Recovery with  sub-Gaussian designs} \label{subsec:subgaus}

In  this section we address the requirements on $n,\, p,\, k$ as well as $\beta_{min}$, to recover the support of $\beta$, either exactly or nearly so, where we assume that $|S_0(\beta)| = k$. Here $S_0(\beta)$ is as in (\ref{snotbeta}). We allow the case that $k$ may be zero. Further, since it may not be a realistic assumption that $k$ is known, we assume that we only know an upper bound $\bar k$ on $k$, with $\bar k \geq \max\{k,\,1\}$.

Let $X_{\ell j}$, for $\ell = 1,\, \ldots,\, n$ and $j =1,\, \ldots,\, p$, denote the entries of the $X$ matrix. Throughout this section we assume that the $X_{\ell j}$'s are independent sub-Gaussian with mean 0 and scale $1$, that is $\E e^{tX_{\ell j}} \leq e^{t^2/2}$, for  $t \in \Rbb$. Further, we assume that the noise vector $\epsilon$ is independent of $X$ and has independent sub-gaussian entries with mean 0 and scale $\sigma$, that is
$\E e^{t\epsilon_{\ell}} \leq e^{\sigma^2 t^2/2}$, for  $t \in \Rbb$, $\ell = 1,\, \ldots,\, n$. Additionally, if $k \geq 1$, we assume that the following two conditions are satisfied with high probability.
\begin{description}
\item[Condition 1.]  There exists $\lambda_{max} \geq \lambda_{min} > 0$, so that the eigenvalues of $X_{S_0}^{\trn}X_{S_0}/n$
are between $\lambda_{min}$ and $\lambda_{max}$, that is $$\lambda_{max}\|v\|^2 \geq \|X_{S_0} v\|^2/n \geq \lambda_{min}\|v\|^2 \quad \mbox{for all} \quad v\in \Rbb^{k}.$$
\item [Condition 2.] The $\ell_2$ norm of the noise vector is bounded, that is $\|\epsilon\|^2/n \leq  \sigma^2\lambda$, for some $\lambda >0$.
\end{description}
 Let $\mathcal{E}_{cond}$ be the event that Conditions 1 or 2 fail. 
The first assumption is related to the restricted isometry property (\citet{candes2005decoding}) and the sparse eigenvalues conditions  (\citet{zhang2008sparsity}).
Condition 1 is satisfied for a wide variety of random ensembles. For example, it is satisfied with high probability for the Gaussian ensemble, where the $X_{\ell j}$ are i.i.d $N(0,1)$ and the binary ensemble, where the $X_{\ell j}$ are i.i.d uniform on $\{-1,\, +1\}$ (see for example, \citet{baraniuk2007johnson}). 
Notice that since we are interested in controlling the probability $P_{err}$ in (\ref{avgXsense}), because of the averaging over $X$, we do require that the Condition 1 hold uniformly over all $S_0$, with $|S_0| = k$. Condition 2, which bounds the $\ell_2$ norm of the noise vector, is required for controlling the norm of the residuals $R_i$. It is satisfied with high probability, for example, when the noise $\epsilon \sim N(0, \sigma^2)$.

Below, we state the theorem giving sufficient conditions on $n$ for reliable recovery of the support of $\beta$. The threshold $\tau$ is taken to be
\eqq{\tau = \sqrt{2(1 + a)\log p}\label{tauformintro},}
for some $a >0$. 
Here $n$ will be a function $\bar k$ and $p$, as well as the various quantities defined above. The results of course hold with $\bar k$ replaced by $k$, provided $k$ is non-zero. 
In particular, for $\alpha,\,\delta > 0$, define \eqq{\xi \equiv \xi(\alpha,\, \delta) = \max\left\{(1 + \delta)r_1,\, \sigma^2 r_2^2 f(\delta)/(\bar k\alpha)\right\}\label{xidef}.}
where,
\eqq{r_1 = \frac{\max\{\lambda_{max},\, \lambda\}}{\lambda_{min}^3} \quad\quad r_2 = \left[\frac{1}{\sqrt{\lambda_{min}}} + \sqrt{r_1}\right]\label{equationr1r2}
}
and
\eqq{f(\delta) = \frac{1}{\left(1 - 1/\sqrt{1 + \delta}\right)^2} \label{equationfdel}}

Denote as $\hat S = \hat S(Y, X,\, \tau)$, the estimate of the support obtained after running the algorithm with the given $Y,\, X$ and threshold $\tau$. Further, denote the undetected elements of the support as $\hat F = S_0 - \hat S$. The theorem  below, provides bounds on $\sum\limits_{j \in \hat F} \beta_j^2$, the signal strength of the undetected components; here we assume that $\sum\limits_{j \in \hat F} \beta_j^2 = 0$  if $\hat F =\emptyset$.

The following function of $k$ characterizes the probability of failure of the algorithm.
\eqq{\perr{k} = \PP(\mathcal{E}_{cond}) + 2(k+1)/p^a + 2k/p^{1+a},\quad \mbox{for}\,\, k \geq 1\label{prk},}
and $\perr{0} = 2/p^a$. Here, recall that $\mathcal{E}_{cond}$ is event that Conditions 1 or 2 fail. Notice that $\perr{k} \leq \perr{\bar k}$, since $k \leq \bar k$.

Regarding the choice of $a$, if $k$ is $O(\log p)$, then $a$ can be taken to be slightly larger than 0 for $\perr{k}$ to be small, assuming $p$ is large; however, if $k$ scales, for example, linearly with $p$, then $a$ needs to be taken to be larger than 1.
We now state our theorem.
\begin{thm} \label{thm:subgauss} 
  Let the threshold $\tau$ be as in (\ref{tauformintro}). 
Further, let $n$ be of the form \eqq{n = \xi \bar k\tau^2\label{nsuffcondsubgaus},}
 with $\xi$ as in (\ref{xidef}).

 Then, if $k \geq 1$, the following condition holds,
except on a set with probability $\perr{k}$:
 \eqq{\hat S \subseteq S_0 \quad\mbox{and}\quad \displaystyle\sum\limits_{j \in \hat F} \beta_j^2 \leq \alpha|\hat F|.\label{eqthmsubgausscond}}
In particular, if $\beta_{min}^2 > \alpha$ then $\hat S = S_0$, that is the support is recovered exactly, with probability at least $1 - \perr{k}$.

If $k = 0$, $\hat S = \emptyset$ with probability at least $1 - \perr{0}$.
\end{thm}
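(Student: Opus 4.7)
The argument is an induction on the OMP step index $i$, after conditioning on $(X_{S_0},\epsilon)$ so as to exploit the independence of $\{X_j:j\notin S_0\}$ from these. The case $k=0$ is a warm-up: then $Y=\epsilon$ is independent of every $X_j$, so each $\Zcal_{1,j}$ is sub-Gaussian$(1)$ with tail $2e^{-\tau^2/2}=2/p^{1+a}$, and a union bound over $j\in J$ shows $\max_j|\Zcal_{1,j}|\leq \tau$ with probability $\geq 1-2/p^a=1-\perr{0}$, so OMP halts at step~$1$ with $\hat S=\emptyset$.

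For $k\geq 1$ I would work on the event that Conditions 1 and 2 hold (failure absorbed into $\PP(\mathcal{E}_{cond})$) together with the event $|X_j^{\trn}\epsilon|\leq \sigma\tau\sqrt{n\lambda}$ for all $j\in S_0$ (failure $\leq 2k/p^{1+a}$, from sub-Gaussianity of $X_j$ conditional on $\epsilon$). After conditioning on $(X_{S_0},\epsilon)$, I would consider the ``restricted'' OMP trajectory that at each step takes $\arg\max$ only over $j\in S_0-d(i-1)$; it is $\sigma(X_{S_0},\epsilon)$-measurable and produces a deterministic sequence of unit vectors $u_{i-1}=R_{i-1}/\|R_{i-1}\|$ of length $\leq k+1$. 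For every $j\notin S_0$, $\Zcal_{ij}=X_j^{\trn}u_{i-1}$ is conditionally sub-Gaussian$(1)$, so
\[
\PP\!\Big(\max_{i\leq k+1}\max_{j\notin S_0}|\Zcal_{ij}|>\tau\Big)\leq \frac{2(k+1)(p-k)}{p^{1+a}}\leq \frac{2(k+1)}{p^a}.
\]
On the complement, at each step either $\max_{j\in S_0}|\Zcal_{ij}|>\tau$ (so the overall $\arg\max$ lies in $S_0$) or $\max_j|\Zcal_{ij}|\leq \tau$ (so OMP halts), forcing $\hat S\subseteq S_0$ and identifying the restricted trajectory with the actual run.

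The bulk of the work is the post-stopping bound on $\beta_{\hat F}$. At the halting index $T$, set $\hat F=S_0-d(T)$ and $\tilde X_{\hat F}=(I-\Pcal_T)X_{\hat F}$. Because $d(T)\subseteq S_0$, the decomposition $R_T=\tilde X_{\hat F}\beta_{\hat F}+(I-\Pcal_T)\epsilon$ holds; the stopping criterion gives $\|X_{\hat F}^{\trn}R_T\|\leq \sqrt{|\hat F|}\,\tau\,\|R_T\|$; and Cauchy interlacing applied to the Schur-complement representation of $\tilde X_{\hat F}^{\trn}\tilde X_{\hat F}/n$ inherits $\succeq \lambda_{min}I$ from Condition 1 on $X_{S_0}^{\trn}X_{S_0}/n$. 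Combining,
\[
n\lambda_{min}\|\beta_{\hat F}\|\leq \|\tilde X_{\hat F}^{\trn}\tilde X_{\hat F}\beta_{\hat F}\|\leq \|X_{\hat F}^{\trn}R_T\|+\|\tilde X_{\hat F}^{\trn}(I-\Pcal_T)\epsilon\|,
\]
where the first right-hand term is controlled via the stopping criterion after a separate bound on $\|R_T\|$, and the second via Condition 2 together with the off-diagonal sub-Gaussian bound above. Substituting $n=\xi\bar k\tau^2$ and tracking constants so that the $(1+\delta)r_1$ branch of $\xi$ absorbs $\sqrt{|\hat F|}\tau\|R_T\|$ while the $\sigma^2 r_2^2 f(\delta)/(\bar k\alpha)$ branch absorbs the noise collapses the inequality to $\|\beta_{\hat F}\|^2\leq \alpha|\hat F|$. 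Exact recovery when $\beta_{min}^2>\alpha$ is then immediate: any $j\in\hat F$ would force $\|\beta_{\hat F}\|^2\geq \beta_{min}^2|\hat F|>\alpha|\hat F|$, so $\hat F=\emptyset$.

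The principal technical obstacle is the last paragraph: controlling $\|R_T\|$ and $\|\tilde X_{\hat F}^{\trn}(I-\Pcal_T)\epsilon\|$ sharply enough to recover both branches of $\xi$. A blunt operator-norm bound gives the correct $n\asymp \bar k\log p$ rate but loses the sharp $\alpha$ dependence; the factor $f(\delta)=(1-1/\sqrt{1+\delta})^{-2}$ arises from optimizing the free parameter $\delta>0$ to balance the two competing contributions, and the cubic power of $\lambda_{min}$ in the denominator of $r_1$ reflects the composition of a Schur-complement inversion with the subsequent propagation through the residual-norm and inner-product bounds.
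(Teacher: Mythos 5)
Your reduction to the truncated problem and the bound on false alarms is the paper's argument: condition on $A=[X_{S_0}:\epsilon]$, note that the restricted trajectory's residuals are $A$-measurable, use the conditional sub-Gaussianity of $X_j^{\trn}\tilde R_{i-1}/\|\tilde R_{i-1}\|$ for $j\notin S_0$, and union-bound over at most $k+1$ steps to get $2(k+1)/p^a$; the $k=0$ case is likewise identical. Your post-stopping analysis, however, has a genuine gap exactly where you flag "the principal technical obstacle," and it is not merely a matter of tracking constants. The theorem's content is that $\alpha$ can be driven as small as one likes by inflating $n$ through the second branch of $\xi$; your admitted fallback (an operator-norm bound on $\|\tilde X_{\hat F}^{\trn}(I-\Pcal_T)\epsilon\|$) yields an additive noise floor of order $n\sigma\sqrt{\lambda_{max}\lambda}$, which after dividing by $n\lambda_{min}$ leaves a constant-times-$\sigma$ term with no factor of $\sqrt{|\hat F|}$ and no decay in $n$, so it cannot produce $\|\beta_{\hat F}\|^2\le\alpha|\hat F|$ for small $\alpha$. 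The sharper route you gesture at also does not close as stated: the event you condition on, $|X_j^{\trn}\epsilon|\le\sigma\tau\sqrt{n\lambda}$ for $j\in S_0$, does not control $X_j^{\trn}(I-\Pcal_T)\epsilon$, because $\Pcal_T$ is built from columns selected using $\epsilon$, so $X_j^{\trn}\Pcal_T\epsilon$ is not covered by that union bound.

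The missing ingredient is the paper's condition (iii) of Lemma \ref{zhnglem}, supplied by Lemma \ref{maxbdd}: an $\ell_\infty$ bound $\|\hat\beta_{ls}-\beta_{S_0}\|_\infty\le\sigma\tau/\sqrt{n\lambda_{min}}$ on the \emph{full} least-squares estimate over $S_0$, a fixed ($T$-independent) quantity amenable to a clean union bound contributing the $2k/p^{1+a}$ term of $\perr{k}$. Your skeleton can then be completed via the identity $X_{\hat F}^{\trn}(I-\Pcal_T)\epsilon=X_{\hat F}^{\trn}(I-\Pcal_T)\Pcal_{S_0}\epsilon=\tilde X_{\hat F}^{\trn}\tilde X_{\hat F}\,(\hat\beta_{ls}-\beta_{S_0})_{\hat F}$ (using $X_{\hat F}^{\trn}(I-\Pcal_{S_0})=0$ and $(I-\Pcal_T)X_{d(T)}=0$), whose norm is at most $n\lambda_{max}\sqrt{|\hat F|}\,\|\hat\beta_{ls}-\beta_{S_0}\|_\infty$ — now carrying the needed $\sqrt{|\hat F|}\,\sigma\tau/\sqrt{n}$ scaling. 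With that repair, your Schur-complement lower bound $\tilde X_{\hat F}^{\trn}\tilde X_{\hat F}/n\succeq\lambda_{min}I$ plus the triangle inequality gives a self-consistent inequality of the same shape as the paper's \eqref{stopineq}, and is arguably a more transparent (and constant-wise slightly different) packaging of what the paper obtains from Zhang's fit-comparison lemma. As written, though, the proposal does not establish the $\sum_{j\in\hat F}\beta_j^2\le\alpha|\hat F|$ conclusion.
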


Notice that $\alpha$ controls accuracy to which the support is estimated. Assuming $\hat F$ is non-empty, another way of stating the theorem is that the average signal strength of the undetected components, that is $\|\beta_{\hat F}\|^2/|\hat F|$, is at most $\alpha$. It may seem desirable to make $\alpha$ as small as possible, however, doing so increases the value of $n$ in (\ref{nsuffcondsubgaus}), since $n$ is inversely related to $\alpha$ through $\xi(\alpha,\, \delta)$. Further, if $\alpha$ is taken to be  less than $\beta_{min}^2$, then the above theorem guarantees exact recovery of the support. Correspondingly, from \eqref{nsuffcondsubgaus} and \eqref{xidef}, one sees that if
$$n = \max\left\{ b_1 \bar k,\,  \frac{b_2}{\beta_{min}^2}\right\}\log p,$$
for some $b_1,\, b_2 >0$, then the support can recovered exactly with high probability.

The following corollary, which is a consequence of Theorem \ref{thm:gauss}, shows that if $n = \Omega(\bar k \log p)$, one can reliably detect the indices with large coefficient values, while ensuring that there are no false discoveries. Further, if all the non zero components are above the noise level (up to a constant factor), one can estimate the support exactly with the same number of observations.
\begin{cor}\label{cor:subgaussnoiselevel}
Define $\bar\xi = 32r_2^2(1 + a)$ and $r = 2r_2\sqrt{1 + a}$. Let \eqq{n \geq \bar\xi\, \bar k\log p.\label{nomega}} Then, if $k \geq 1$, with probability at least $1 - \perr{k}$, the estimate $\hat S$ is contained in $S_0$ and further,
$$\left\{j : |\beta_j| > r\,\sigma\sqrt{k}\mu_n\right\} \subseteq \hat S.$$
Further, if $\beta_{min} > r\,\sigma\mu_n$, then algorithm can recover the entire support of $\beta$, that is $\hat S = S_0$, with probability at least $1  - \perr{k}$.

If $k = 0$, then $\hat S = \emptyset$ with probability at least $1 - \perr{0}$. Here $\perr{.}$ is as in (\ref{prk}).
\end{cor}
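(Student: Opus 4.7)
The plan is to deduce this corollary directly from Theorem \ref{thm:subgauss} by instantiating the free parameters $\alpha$ and $\delta$ to tuned values. Given $n \geq \bar\xi\,\bar k \log p = 16 r_2^2\, \bar k \tau^2$ (using $\tau^2 = 2(1+a)\log p$), I choose $\delta = 3$ and $\alpha = r^2\sigma^2\mu_n^2 = 8 r_2^2(1+a)\sigma^2\log p / n$, and verify that $\xi(\alpha,\delta) \leq n/(\bar k\tau^2)$, so that the hypothesis $n \geq \xi(\alpha,\delta)\,\bar k\tau^2$ of Theorem \ref{thm:subgauss} holds.

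Verifying $\xi(\alpha,\delta)\leq n/(\bar k\tau^2)$ amounts to two inequalities. For the first term, $(1+\delta)r_1 = 4 r_1$; since $r_2 = 1/\sqrt{\lambda_{\min}} + \sqrt{r_1} \geq \sqrt{r_1}$ from \eqref{equationr1r2}, we have $r_1 \leq r_2^2$, hence $4r_1 \leq 4r_2^2 \leq 16 r_2^2 \leq n/(\bar k\tau^2)$. For the second term, note that at $\delta = 3$ one has $f(\delta) = 4$, and a direct substitution gives
\[
\frac{\sigma^2 r_2^2 f(\delta)}{\bar k\,\alpha} = \frac{4\sigma^2 r_2^2}{\bar k\cdot 8 r_2^2(1+a)\sigma^2\log p/n} = \frac{n}{2\bar k(1+a)\log p} = \frac{n}{\bar k\tau^2}.
\]
Thus both terms are bounded by $n/(\bar k\tau^2)$, so Theorem \ref{thm:subgauss} applies and yields, on an event of probability at least $1-\perr{k}$, that $\hat S\subseteq S_0$ and $\sum_{j\in\hat F}\beta_j^2 \leq \alpha |\hat F|$.

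It remains to convert this average bound on $\beta_{\hat F}$ into the two stated inclusion/recovery claims. For the first claim, suppose $j_0 \in S_0$ satisfies $|\beta_{j_0}| > r\sigma\sqrt{k}\mu_n$. If $j_0\in\hat F$, then $\beta_{j_0}^2 > r^2\sigma^2 k\mu_n^2 = \alpha k \geq \alpha|\hat F|$, since $\hat F\subseteq S_0$ forces $|\hat F|\leq k$; but then $\sum_{j\in\hat F}\beta_j^2 \geq \beta_{j_0}^2 > \alpha|\hat F|$, contradicting the bound from the theorem. Hence $j_0\in\hat S$. For the second claim, if $\beta_{\min} > r\sigma\mu_n$ so that $\beta_j^2 > \alpha$ for every $j\in S_0$, a non-empty $\hat F$ would give $\sum_{j\in\hat F}\beta_j^2 > \alpha|\hat F|$, again a contradiction; so $\hat F = \emptyset$ and $\hat S = S_0$.

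The $k = 0$ case is inherited directly from the corresponding statement in Theorem \ref{thm:subgauss} since the condition on $n$ is only used through the theorem and $\perr{0} = 2/p^a$ is handled there. The main (modest) obstacle is purely bookkeeping: choosing the constants $\delta$ and $\alpha$ so that $\xi(\alpha,\delta)$ comes out exactly at $n/(\bar k\tau^2)$ while simultaneously matching the specific form $r = 2 r_2\sqrt{1+a}$; the bound $r_1\leq r_2^2$ inherent in \eqref{equationr1r2} is what makes the constant $\bar\xi = 32 r_2^2(1+a)$ work uniformly.
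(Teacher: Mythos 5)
Your proposal is correct and follows essentially the same route as the paper: instantiate Theorem \ref{thm:subgauss} with $\delta$ at (or near) the optimizing value $3$ and $\alpha$ of order $r^2\sigma^2\mu_n^2$, then rule out large coefficients lying in $\hat F$ by contradiction with $\sum_{j\in\hat F}\beta_j^2\leq\alpha|\hat F|$ and $|\hat F|\leq k$. The only cosmetic difference is that the paper varies $\delta\geq 3$ so that $n=\xi(\alpha(\delta),\delta)\,\bar k\tau^2$ holds exactly, whereas you fix $\delta=3$ and tune $\alpha$ directly; since your computation makes the second term of the maximum in (\ref{xidef}) equal $n/(\bar k\tau^2)$ (and the theorem's proof in any case only uses $n\geq\xi\,\bar k\tau^2$), this is immaterial.
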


\subsection{More general results with Gaussian designs} \label{subsec:gauss}

For Gaussian ensembles, the methods used in the proof of Theorem \ref{thm:subgauss} can be extended to give more general results on support recovery. In particular, we relax the assumption that $X$ has i.i.d entries and assume that rows of the $X$ matrix are i.i.d $N_p(0, \Sigma)$. The noise vector is assumed to be independent of $X$, with entries i.i.d. $N(0,\sigma^2)$. As mentioned earlier, here we also address a more general type of variable selection question, where we are not interested in recovering all non-zero entries but only the ones that are large compared to the noise level. In particular, for a constant $\maxinf \geq 0$,  let $S$ be a set of size $k$ as in (\ref{sassump}), consisting of the indices corresponding to the larger elements (in magnitude) of $\beta$. 
Once again, we do not assume that $k$ is known, but only assume that we have an upper bound $\bar k$ on $k$, with $\bar k \geq 1$. Unlike before, we do not require that the coefficients outside of $S$ are zero, but only assume that that $\|\beta_{S^c}\|_1\leq \sigma\sprs\mu_n$, where $\sprs$ is allowed to scale at most linearly with $\bar k$, that is we assume that $\bar\sprs = \eta/\bar k$ is $O(1)$. 

Through a permutation of the columns one can, without loss of generality, write $\Sigma$ as
 $$\Sigma =\left[\begin{array}{c c}\Sigma_{SS} & \Sigma_{SS^c}\\
\Sigma_{S^cS} &\Sigma_{S^cS^c}\\\end{array}\right],$$
where for $\mathcal{A},\, \mathcal{A}' \subseteq J$, $\Sigma_{\mathcal{A}, \mathcal{A}'} = \mbox{Cov}(X_{1,\mathcal{A}}, X_{1,\mathcal{A}'})$ is the covariance matrix between terms in $\mathcal{A}$ and $\mathcal{A}'$. We denote the elements of the matrix as $\sigma_{ij}$, or $\Sigma_{ij}$, and use both notations interchangeably. Without loss, we assume that $\sigma_{jj} = 1$ for all $j$, since if this were not the case, we could always scale the coefficient vector to produce such a correlation matrix.

We make the following assumptions on the correlation matrix $\Sigma$, when $k \geq 1$. These are essentially population analogs of the sparse eigenvalue  and the irrepresentable conditions respectively.
\begin{enumerate}
\item There exists $s_{min},\, s_{max} >0$ so that,
\eqq{\lambda_{min}(\Sigma_{TT}) \geq s_{min} \quad \mbox{and}\quad \lambda_{\max}(\Sigma_{TT}) \leq s_{min}\label{eigenpop},}
uniformly for all subsets $T$, with $|T| = k$. Here $\lambda_{min}(A),\, \lambda_{max}(A)$ denotes the minimum and maximum eigenvalues respectively of a square matrix $A$.
\item  For some $\maxcorrlone \in [0,1)$, the following holds,
\eqq{\max_{j \in J - T} \|\Sigma_{TT}^{-1}\Sigma_{Tj}\|_{1} \leq \maxcorrlone\label{irrepresentablecond},}
uniformly for all subsets $T$ of size $k$. This is essentially the population analog of the irrepresentable condition (\ref{irrepresentablecondpop}).
\end{enumerate}

Additionally, for $k \geq 1$, we make the following assumption that imposes bounds on certain interactions between $\beta_{S^c}$ and the correlation matrix $\Sigma$. 
As stated below, they are not very intuitive. Lemma \ref{lemsimpsuffcond}, however, shows that under a simple condition, which controls the magnitude of correlations of the off diagonal elements of $\Sigma$, and along with (\ref{betascl1}),  one can show (\ref{eigenpop}) - (\ref{betalinfconstrain}) to hold.

Let $\Sigma_{S^c|S} = \Sigma_{S^c S^c} - \Sigma_{S^c S}\Sigma_{SS}^{-1}\Sigma_{SS^c}$, denote the variance of the conditional distribution of $X_{1,S^c}$ given $X_{1,S}$, where we recall that $S$ is the subset of indices comprising of the $k$ largest elements (in magnitude) of $\beta$. Let $\mu_n$ be as in (\ref{mundef}). We make the following additional assumption.
\begin{enumerate}
\item[3.] For constants $\maxcon,\, \tilde\maxcon \geq 0$, the following holds, \eqq{\|\Sigma_{SS}^{-1}\Sigma_{SS^c}\beta_{S^c}\|_\infty \leq \sigma\tilde\maxcon\,\mu_n \quad\mbox{and}\quad \|\Sigma_{S^c|S}\beta_{S^c}\|_\infty \leq \sigma\maxcon\,\mu_n\label{betalinfconstrain}.}
\end{enumerate}


Notice that condition (\ref{betalinfconstrain}) is not required when $\beta$ is exactly sparse, that is when it has $k$ non-zero entries, since in this case $\beta_{S^c}$ is identically equal to zero. In this case, assumptions (\ref{eigenpop}, \ref{irrepresentablecond}) for exactly sparse vectors are identical to the sufficient conditions for support recovery for the Lasso by \citet{wainwright2009sharp}.

As an example, for the standard gaussian design, condition (\ref{eigenpop}) is satisfied with $s_{min}=s_{max}=1$. Condition (\ref{irrepresentablecond}) is satisfied with $\maxcorrlone = 0$. Condition (\ref{betalinfconstrain}) reduces to requiring that $\max_{j \in S^c}|\beta_j| \leq \sigma\maxcon\,\mu_n$, which is satisfied with $\maxcon = \maxinf$.

For the case $k = 0$, instead of (\ref{eigenpop}) - (\ref{betalinfconstrain}), we only make the assumption,
\eqq{\|\Sigma\,\beta\|_\infty \leq \sigma\maxcon\,\mu_n\label{assumk0}.}
Notice that since in this case $S = \emptyset$ and $J = S^c$, alternatively, one may express the left side of the above as
$\|\Sigma_{S^c|S}\beta_{S^c}\|_\infty$.


It is well known, see for example \citet{cai2010orthogonal}, \citet{tropp2004greed}, that if the correlations between any two distinct columns are small, as given by the incoherence condition, it implies both the sparse eigenvalue condition (\ref{eigenpop}) as well as the irrepresentable condition (\ref{irrepresentablecond}). We use these results to give simple sufficient conditions for (\ref{eigenpop}) - (\ref{betalinfconstrain}), as well as (\ref{assumk0}) when $k = 0$, in the following lemma. For this, define the coherence parameter,
\eqq{\gamma \equiv \gamma(\Sigma) = \max_{1 \leq j \neq j' \leq p} |\Sigma_{jj'}|\label{eqincoherence}.}
Further, recall that $\bar \sprs = \sprs/\bar k$. Then we have the following.
\begin{lem}
  \label{lemsimpsuffcond} Let $S$, with $|S| = k$,  be as in (\ref{sassump}). Assume that the correlation matrix $\Sigma$ satisfies,
  \eqq{\gamma(\Sigma) \leq \maxcorr/(2\bar k), \quad \mbox{where} \quad 0\leq \maxcorr < 1 \label{eqincoherencecond}.}
  Further, assume that the coefficient vector $\beta$ satisfies, for some $\sprs \geq 0$,
  \eqq{\|\beta_{S^c}\|_1 \leq \sigma\sprs\mu_n\label{eqbetasimpsuffcond}.}
  Define:
  \algg{s_{min} = 1 - \maxcorr/2\quad \quad \quad s_{max} &= 1 +\maxcorr/2\quad \quad\quad\maxcorrlone =\maxcorr\label{derivsminmax}\\
  \tilde\maxcon = \maxcorr\bar\sprs\quad\quad &\maxcon = \maxinf + \maxcorr\bar\sprs,\label{derivkappa}
        }
  Then, conditions (\ref{eigenpop}) - (\ref{betalinfconstrain}) holds, for $k = 1,\, \ldots,\, \bar k$, with the above values of $s_{min},\, s_{max},\, \maxcorrlone,\, \maxcon$ and $\tilde\maxcon$.

  If $k = 0$, condition (\ref{assumk0}) holds with $\maxcon$ in (\ref{derivkappa}).
\end{lem}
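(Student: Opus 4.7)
The plan is to view each submatrix $\Sigma_{TT}$, for $T\subseteq J$ with $|T| = k \leq \bar k$, as a perturbation of the identity: write $\Sigma_{TT} = I + A$ where $A$ has zero diagonal and $|A_{ij}|\leq \gamma \leq \maxcorr/(2\bar k)$ off the diagonal. Two elementary inequalities then drive everything. For any $u\in\Rbb^k$, a direct entrywise bound gives $\|Au\|_\infty \leq \gamma\|u\|_1$, and since each row and each column of $A$ has at most $k-1$ nonzero entries of magnitude at most $\gamma$, we also have $\|Au\|_1 \leq (k-1)\gamma\|u\|_1 \leq (\maxcorr/2)\|u\|_1$ and $\|Au\|_\infty \leq (k-1)\gamma\|u\|_\infty \leq (\maxcorr/2)\|u\|_\infty$. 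Condition \eqref{eigenpop} is then immediate from Gershgorin's disc theorem: every eigenvalue of $\Sigma_{TT}$ lies in $[1-(k-1)\gamma,\,1+(k-1)\gamma]\subseteq [1-\maxcorr/2,\,1+\maxcorr/2]$, matching the stated $s_{min}$ and $s_{max}$.

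For condition \eqref{irrepresentablecond}, let $v = \Sigma_{TT}^{-1}\Sigma_{Tj}$ for $j \notin T$ and rearrange $\Sigma_{TT}v = \Sigma_{Tj}$ as $v = \Sigma_{Tj} - Av$. Taking $\ell_1$-norms and using $\|\Sigma_{Tj}\|_1 \leq k\gamma \leq \maxcorr/2$ together with $\|Av\|_1 \leq (\maxcorr/2)\|v\|_1$ gives $\|v\|_1 \leq (\maxcorr/2)/(1-\maxcorr/2) \leq \maxcorr$, the last inequality being the elementary fact that $1/(1-\maxcorr/2) \leq 2$ for $\maxcorr < 1$. This yields $\maxcorrlone = \maxcorr$.

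For condition \eqref{betalinfconstrain}, apply the same perturbation device to $u = \Sigma_{SS^c}\beta_{S^c}$ and $w = \Sigma_{SS}^{-1}u$. The entrywise bound $\|u\|_\infty \leq \gamma\|\beta_{S^c}\|_1 \leq (\maxcorr\bar\sprs/2)\sigma\mu_n$ combined with $w = u - Aw$ and the $\ell_\infty$ version of the perturbation inequality yields $\|w\|_\infty \leq \|u\|_\infty/(1-\maxcorr/2) \leq \maxcorr\bar\sprs\,\sigma\mu_n$, establishing $\tilde\maxcon = \maxcorr\bar\sprs$. For the second half, split $\Sigma_{S^c|S}\beta_{S^c} = \Sigma_{S^cS^c}\beta_{S^c} - \Sigma_{S^cS}w$ and bound each piece in $\ell_\infty$. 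Entry $j \in S^c$ of the first piece is controlled by $|\beta_j| + \gamma\|\beta_{S^c}\|_1 \leq \sigma(\maxinf + \maxcorr\bar\sprs/2)\mu_n$, using $|\beta_j|\leq\sigma\maxinf\mu_n$ from \eqref{sassump}. Entry $j$ of the second piece is controlled by $\gamma\|w\|_1 \leq \gamma\bar k\|w\|_\infty \leq (\maxcorr^{2}\bar\sprs/2)\sigma\mu_n \leq (\maxcorr\bar\sprs/2)\sigma\mu_n$. Summing produces $\|\Sigma_{S^c|S}\beta_{S^c}\|_\infty \leq \sigma(\maxinf + \maxcorr\bar\sprs)\mu_n$, matching $\maxcon = \maxinf + \maxcorr\bar\sprs$. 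The $k=0$ case is a direct specialization: with $S = \emptyset$ one has $|(\Sigma\beta)_j| \leq |\beta_j| + \gamma\|\beta\|_1 \leq \sigma(\maxinf + \maxcorr\bar\sprs/2)\mu_n \leq \sigma\maxcon\mu_n$.

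There is no real obstacle here: the entire argument is a first-order perturbation around the identity, combined with Gershgorin. The only mildly delicate point is arranging the bookkeeping so that the Neumann-series factor $1/(1-\maxcorr/2)$ collapses back into $\maxcorr$ via the elementary inequality above, and so that in the splitting of $\Sigma_{S^c|S}\beta_{S^c}$ the two summands each contribute at most $\sigma\maxcorr\bar\sprs\mu_n/2$, combining into the claimed $\maxcon - \maxinf = \maxcorr\bar\sprs$.
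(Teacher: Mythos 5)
Your proof is correct and follows essentially the same route as the paper: the same reduction of the incoherence bound $\gamma \leq \maxcorr/(2\bar k)$ to the eigenvalue and irrepresentable conditions, followed by the same splitting $\Sigma_{S^c|S}\beta_{S^c} = \Sigma_{S^cS^c}\beta_{S^c} - \Sigma_{S^cS}\Sigma_{SS}^{-1}\Sigma_{SS^c}\beta_{S^c}$ with identical $\ell_1$/$\ell_\infty$ bookkeeping yielding $\tilde\maxcon = \maxcorr\bar\sprs$ and $\maxcon = \maxinf + \maxcorr\bar\sprs$. The only difference is that where the paper cites Lemma 2 of Cai--Wang for the eigenvalue bounds and Theorem 3.5 of Tropp (2004) for $\|\Sigma_{TT}^{-1}\Sigma_{Tj}\|_1$ and $\ppp \Sigma_{SS}^{-1}\ppp_\infty$, you re-derive these in-line via Gershgorin and a Neumann-type perturbation of the identity, which is essentially how the cited results are themselves proved.
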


The above lemma is proved in Appendix \ref{pf:lemsimpsuffcond}. Equation (\ref{eqincoherencecond}) controls the maximum correlation between distinct columns and can be regarded as the population analog of the incoherence condition (\ref{incoherence}). Condition (\ref{eqbetasimpsuffcond}) imposes that $\beta_{S^c}$  has $\ell_1$ norm that is $O(\sprs\mu_n)$, where as mentioned before, $\sprs$ is allowed to scale at most linearly with $\bar k$.

Henceforth, for convenience sake, assume that we have control over the incoherence parameter as in (\ref{eqincoherencecond}) and that $\beta$ satisfies (\ref{eqbetasimpsuffcond}). Further, the quantities
$s_{min},\, s_{max},\, \maxcorrlone,\, \maxcon$ and $\tilde\maxcon$ will be as in (\ref{derivsminmax}) and (\ref{derivkappa}).

Condition (\ref{eqbetasimpsuffcond}) is more appropriate than an $\ell_1$ constraint on the whole vector $\beta$ since it does not impose any constraint on the larger coefficient values. Since the $\beta_j$, for $j \in S^c$, has magnitude at most $\sigma\maxinf\mu_n$, which is of the same order as the noise level, it makes sense for any algorithm to only estimate $S$ accurately. In Theorem \ref{thm:gauss} below, we give sufficient conditions on $n$ so that one can reliably estimate $S$. We note that this goal is different from that required in \citet{zhang2008sparsity} for support recovery with approximately sparse $\beta$. There, the only constraint on $\beta$ was that  $\|\beta_{A_0}\|_1 = O(\sprs\mu_n)$, for some set $A_0$, with $|A_0^c| = k$, and where $\sprs$ is also allowed to grow at most linearly $k$. Since there was no constraint on the magnitude of  $\beta_j$, for $j \in A_0$, some these $\beta_j$'s may have magnitude as high as $O(k \mu_n)$. For this reason, it made no longer sense to estimate $A_0^c$ accurately. Their criterion for an estimate $\hat S$ to be good was that $|\hat S| = O(k)$ and that the least squares fit of $Y$ on the columns in $\hat S$ produced a good approximation to $X\beta$.

The quantities $\lambda_{min},\, \lambda_{max}$ and $\lambda$ are redefined here. These will now be expressed as functions $\maxinf,\, \maxcorr$ and $\sprs$ using the various quantities $s_{min},\, s_{max},\, \maxcorrlone,\, \tilde\maxcon$ and $\maxcon$  defined in (\ref{derivsminmax}) and (\ref{derivkappa}).

We will need that the quantity $h = \sqrt{k/n} + \mu_n$ to be strictly less than one. Below, we arrange $n > 2\bar k\log p$. Correspondingly, one sees that $h < 1$ if, for example, $\bar k \geq 5$ and $p \geq 8$. Let $h_\ell = (1 -h)^2$ and $h_u = (1 + h)^2$.
We define the values of $\lambda_{min},\, \lambda_{max}$ and $\lambda$ in the following manner:
 \eqq{\lambda_{min} = s_{min}h_\ell \quad \mbox{and} \quad\lambda_{max} = s_{max}h_u\label{lambdaminmaxgaus}.}
Further, \eqq{\lambda = (1 + s_{max}^2\tilde\maxcon^2 + \maxcon\bar{\sprs})\left(1 + \bar k^{-1/2}\right)^2\label{lambdagauss}.}
Let $r_1$ be as in (\ref{equationr1r2}), now replaced with the above values of $\lambda_{min},\, \lambda_{max},\, \lambda$. The quantity $r_2$ is now given by,
\eqq{r_2 = \left[(1 - \maxcorrlone)\left(\tilde\maxcon+ \sqrt{\frac{1 + \maxcon\bar{\sprs}}{\lambda_{min}}}\right) + \sqrt{r_1}\right]\label{equationr2gaus}.}
Notice that for the i.i.d Gaussian ensemble and when $\beta$ is $k$-sparse, the quantities $\maxcorrlone,\, \tilde\maxcon,\, \maxcon$ and $\bar{\sprs}$ can be taken as zero. Correspondingly, $r_2$ has the same form as that in (\ref{equationr1r2}).

Further, let $\xi = \xi(\alpha,\, \delta)$ be as in (\ref{xidef}), with $r_1$ and $r_2$ appearing in its definition replaced with the values of these quantities defined above. The quantity $\perrp{k}$, for $k \geq 1$, which controls the probability of failure of the algorithm, is defined as,
\eqq{\perrp{k} = 4/p\, + \frac{\sqrt{2/\pi}}{\tau}\left[(k+1)/p^a + k/p^{1+a}\right]\label{prkgauss}.}
We define $\perrp{0} = 1/p + \sqrt{(2/\pi)}/(\tau p^a)$.
The threshold will now be denoted as $\tau_1$. It will be greater than $\tau$ by a factor $\rho \geq 1$. This factor is strictly greater than one if $\beta$ is not $\ell_0$-sparse or if  $\gamma(\Sigma)$ is non-zero.
 We are now in a position to state our main theorem. 

\begin{thm} \label{thm:gauss} Let the assumptions of Lemma \ref{lemsimpsuffcond} hold. Set the
threshold as $ \tau_1 = \inftau\, \tau$, where $\tau$ as in (\ref{tauformintro}),
and
\eqq{\inftau = \frac{\maxcon\left(1 + \bar k^{-1/2}\right) + 1}{1 - \maxcorrlone}\label{tildec}.}
Further, let \eqq{n = \xi\,\bar k\tau_1^2\label{nsuffcond1}.}
Then, if $k\geq 1$ the following holds with probability at least $1 - \perrp{k}$:
\eqq{\hat S \subseteq S\quad \mbox{and} \quad \sum_{j \in \hat F}\beta_j^2 \leq \alpha|\hat F|
\label{eqthmgausscond},}
where $\hat F = S - \hat S$. In particular, if $\beta_j^2 > \alpha$, for all $j \in S$, then $\hat S = S$ with probability at least $1 - \perrp{k}$.

If $k =0$, one has that $\hat S = \emptyset$ with probability at least $1 - \perrp{0}$.
\end{thm}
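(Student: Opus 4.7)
The plan is to mirror the iterative scheme used for Theorem \ref{thm:subgauss}, but with two new ingredients that address (a) the correlation structure of $\Sigma$ and (b) the contamination from the non-zero small coefficients $\beta_{S^c}$. The backbone remains inductive: at each step $i$, assuming $d(i-1) \subseteq S$, show that either the stopping criterion triggers with the remaining signal on $\hat F = S - d(i-1)$ satisfying $\sum_{j \in \hat F}\beta_j^2 \leq \alpha|\hat F|$, or else a new index from $S$ (not from $S^c$) gets selected. The case $k=0$ is the degenerate base case: show $\max_j |\Zcal_{1j}| \leq \tau_1$ with high probability, using (\ref{assumk0}) instead of (\ref{betalinfconstrain}).

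The principal new tool is the conditional Gaussian decomposition
\eqs{X_j = X_S\,\Sigma_{SS}^{-1}\Sigma_{Sj} + W_j, \qquad j \in S^c,}
where $W_j$ is independent of $X_S$ with covariance given by $\Sigma_{S^c|S}$. Since $R_{i-1}$ lies in a subspace containing $X_S$'s column space together with $\epsilon$ and $X_{S^c}\beta_{S^c}$, the statistic $\Zcal_{ij}$ for $j \in S^c$ splits into three pieces: a deterministic piece $\Sigma_{jS}\Sigma_{SS}^{-1} X_S^{\trn} R_{i-1}/\|R_{i-1}\|$, whose $\ell_\infty$ over $j\in S^c$ is bounded by $\maxcorrlone \cdot \max_{j \in S}|\Zcal_{ij}|$ via the irrepresentable condition (\ref{irrepresentablecond}); a Gaussian piece $W_j^{\trn}R_{i-1}/\|R_{i-1}\|$ handled by a uniform sub-Gaussian tail bound giving $O(\tau)$; and a bias piece from $\beta_{S^c}$ controlled by (\ref{betalinfconstrain}) through $\tilde\maxcon$ and $\maxcon$. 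These three terms combine to give the triangle inequality
\eqs{\max_{j\in S^c}|\Zcal_{ij}| \leq \maxcorrlone\,\max_{j\in S}|\Zcal_{ij}| + \bigl(\maxcon(1 + \bar k^{-1/2}) + 1\bigr)\tau,}
which, once the algorithm has selected a $j \in S$ (so the right side is larger than a threshold crossing), explains the factor $\inftau = (\maxcon(1 + \bar k^{-1/2}) + 1)/(1 - \maxcorrlone)$ in (\ref{tildec}) exactly: it is calibrated so that no $j \in S^c$ can be the argmax whenever the stopping criterion has not yet fired.

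For the detection side and the bound on $\|\beta_{\hat F}\|^2$, the residual at step $i$ satisfies $R_{i-1} = (I - \Pcal_{i-1})\bigl(X_{S}\beta_{S} + X_{S^c}\beta_{S^c} + \epsilon\bigr)$ with $\Pcal_{i-1}$ the projection onto the columns indexed by $d(i-1)\subseteq S$. The signal part contributes $\|X_{\hat F}\beta_{\hat F}\|^2 \geq n\lambda_{min}\|\beta_{\hat F}\|^2$ by (\ref{eigenpop}), while the cross-terms and the $X_{S^c}\beta_{S^c}+\epsilon$ terms are absorbed into a redefined $\lambda$ in (\ref{lambdagauss}); this is precisely the role of the $(1 + s_{max}^2\tilde\maxcon^2 + \maxcon\bar\sprs)(1 + \bar k^{-1/2})^2$ factor there. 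A Cauchy--Schwarz argument of the form $\max_{j\in \hat F}|\Zcal_{ij}| \geq \sqrt{n\lambda_{min}/|\hat F|}\cdot \|\beta_{\hat F}\|/\|R_{i-1}\|$, together with the upper bound $\|R_{i-1}\|^2 \leq O(n)\cdot(\|\beta_{\hat F}\|^2 + \sigma^2)$, reduces the "continue selecting" criterion to $\|\beta_{\hat F}\|^2/|\hat F| > \alpha$, with $\alpha$ tied to $\xi(\alpha,\delta)$ through (\ref{xidef}) exactly as in Theorem \ref{thm:subgauss}. The redefinition of $r_2$ in (\ref{equationr2gaus}) is the natural propagation of the factors $(1-\maxcorrlone)^{-1}$, $\tilde\maxcon$, and $\maxcon\bar\sprs$ through this chain.

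The main obstacle I anticipate is the uniform handling of the bias $\beta_{S^c}$: unlike the exactly sparse case, $R_{i-1}$ always carries the contamination $(I-\Pcal_{i-1})X_{S^c}\beta_{S^c}$, which must be bounded simultaneously for every intermediate detected set $d(i-1)\subseteq S$ and interacts with both the false-alarm step and the detection step. The two halves of condition (\ref{betalinfconstrain}) are engineered for exactly this: $\|\Sigma_{S^c|S}\beta_{S^c}\|_\infty \leq \sigma\maxcon\mu_n$ bounds the portion of $X_{S^c}\beta_{S^c}$ orthogonal to $X_S$ (contributing to $\Zcal_{ij}$ for $j\in S^c$), while $\|\Sigma_{SS}^{-1}\Sigma_{SS^c}\beta_{S^c}\|_\infty \leq \sigma\tilde\maxcon\mu_n$ bounds its projection onto $X_S$ (contributing to the least-squares bias that perturbs $R_{i-1}$ on the $S$-side). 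A careful union bound over the at most $\bar k + 1$ steps, together with Conditions 1--2 from Section \ref{subsec:subgaus} now invoked on $X_S$ via Gaussian singular-value concentration and $\chi^2$ tails for $\|\epsilon\|^2$, assembles the probability budget into the $\perrp{k}$ of (\ref{prkgauss}).
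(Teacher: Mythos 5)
Your plan follows the paper's own route: the truncated-problem/first-false-alarm coupling from Theorem \ref{thm:subgauss}, the conditional decomposition of $X_j$ for $j \in S^c$ into an $X_S$ component handled by the irrepresentable condition, a component along the contamination direction of size $\maxcon\left(1+\bar k^{-1/2}\right)\tau$, and an independent Gaussian piece (this is exactly the paper's Lemma \ref{lemconddist}), combined with the Zhang-style stopping analysis of Lemma \ref{zhnglem} whose eigenvalue, residual-norm, and least-squares-bias hypotheses are verified as in Lemma \ref{anallem}. The only point to make explicit in a write-up is that your ``Gaussian piece'' must come from conditioning on $[X_S : \tilde\epsilon]$ rather than on $X_S$ alone, since the conditional residual $W_j$ as you define it is still correlated with the algorithm's residuals through $X_{S^c}\beta_{S^c}$; your three-term bound with the correct constant shows you intend precisely this finer $b_jW + Z_j$ split.
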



 Before stating the analog of Corollary \ref{cor:subgaussnoiselevel}, as an aside, we give implications of the above theorem for exact recovery of support for $k$-sparse vectors and i.i.d designs for large $n,\, p$ and $k$. This will help in understanding the results of Theorem \ref{thm:gauss} better.

In \cite{wainwright2009sharp} it was shown that for $k$-sparse vectors and i.i.d Gaussian designs that there is a sharp threshold, namely $n \asymp 2k\log p$, for exact recovery of the support as $n,\, p,\, k$, as well as $k\beta_{min}^2/\sigma^2$, tends to infinity. This was also proved for the OMP in \cite{fletcherorthogonal}, under an additional condition on rate of increase of the signal-to-noise ratio ($\|\beta\|^2/\sigma^2$). We can get similar results using our method by recalling that for i.i.d Gaussian designs and exact sparse vectors, $s_{min} = s_{max} = 1$ and $\maxcorrlone,\, \maxcon,\, \tilde \maxcon$ and $\sprs$ are all zero. Further, take $\bar k =k$. Correspondingly, since $h$ goes to 0, the quantities $\lambda_{min},\, \lambda_{max}$ and $\lambda$ in (\ref{lambdaminmaxgaus}, \ref{lambdagauss}) tend to 1 as $n,\, p$ and $k$ become large. This implies that $r_1$ tends to one and $r_2$ (\ref{equationr2gaus}) tends to 2. Further, as $k\beta_{min}^2/\sigma^2$ tends to infinity, one may also allow $k\alpha/\sigma^2$ tend to infinity, while keeping $\alpha < \beta_{min}$. From Theorem \ref{thm:gauss}, this will ensure that the support will be recovered exactly. Next, let's evaluate the quantity $\xi$ (\ref{xidef}) appearing in the expression for $n$. As $k\alpha/\sigma^2$ tends to infinity, one sees that the first term in the maximum in (\ref{xidef}) is the active one and hence $\xi$ tends to $(1 +\delta)$ (using $r_1$ tends to 1). One may also appropriately choose $\delta$ to tend to zero, making $\xi$ tend to 1. Accordingly, from (\ref{nsuffcond1}), one sees that if $n \approx 2(1 + a) k\log p$, for large $k,\, p$, one can recover the support exactly, with probability at least $1 - \perrp{k}$. When $\beta$ is extremely sparse, for example, when $k = O(\log p)$, then it is possible to arrange for $a$ to decrease to 0, while making $\perrp{k}$ also to 0. In this case, one gets the threshold $n \approx 2 k\log p$ for exact recovery. However, in the regime where $k$ is not negligible compared to $p$ (for example, when $k/p$ is constant), then our results only allow for $a$ to tend to 1 (from above), so as to ensure $\perrp{k}$ goes to zero. In this case our results are slightly inferior, requiring $n \approx 4k\log p$ for exact recovery. We remark in Section \ref{sec:conclusion} on how the results in \cite{fletcherorthogonal} may be carried over to the general case analyzed here.

  We now state the analog of Corollary \ref{cor:subgaussnoiselevel}. The goal now is not to recover the non-zero entries, but only those that are large compared to the noise level, which is a subset of $S$. 
We have the following.

\begin{cor}\label{cor:gaussnoiselevel} Let the assumptions of Lemma \ref{lemsimpsuffcond} hold and set the threshold to be $\tau_1$ as in Theorem \ref{thm:gauss}. Define $\bar\xi = 32(r_2\inftau)^2(1 + a)$ and $r = 2r_2\rho\sqrt{1 + a}$, where $r_2$ as in (\ref{equationr2gaus}). Let \eqq{n \geq \bar\xi\, \bar k\log p.\label{nomega}} Then, if $k\geq 1$, with probability at least $1 - \perrp{k}$, the estimate $\hat S$ is contained in $S$ and,
\eqq{\left\{j : |\beta_j| > r\,\sigma\sqrt{k}\mu_n\right\} \subseteq \hat S.\label{gausssqrtk}}
Further, if $|\beta_{j}| > r\,\sigma\mu_n$, for all $j \in S$, one has $\hat S = S$ with probability at least $1 - \perrp{k}$.

If $k = 0$, then $\hat S$ is $\emptyset$ with probability at least $1 - \perrp{0}$.
\end{cor}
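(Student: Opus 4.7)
The plan is to deduce Corollary \ref{cor:gaussnoiselevel} as a direct specialization of Theorem \ref{thm:gauss}, by picking the free parameters $\alpha$ and $\delta$ inside $\xi(\alpha,\delta)$ so that the theorem's required sample size matches the given lower bound $\bar\xi\,\bar k\log p$, and so that the $\ell_2$-type conclusion $\sum_{j\in\hat F}\beta_j^2 \leq \alpha|\hat F|$ translates cleanly into the claimed pointwise statement about large coefficients.

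The key choice is $\alpha = r^2\sigma^2\mu_n^2$ together with $\delta = 3$. The choice of $\alpha$ is driven by the target (\ref{gausssqrtk}): since $|\hat F|\leq k$, the conclusion of Theorem \ref{thm:gauss} forces $\beta_{j_0}^2 \leq \sum_{j\in\hat F}\beta_j^2 \leq \alpha|\hat F| \leq \alpha k = r^2\sigma^2 k\mu_n^2$ for every $j_0\in\hat F$, so any index with $|\beta_{j_0}| > r\sigma\sqrt{k}\mu_n$ must already lie in $\hat S$. The ``in particular'' clause of the corollary follows directly from the ``in particular'' clause of the theorem at this $\alpha$, since the inequality $\beta_j^2 > \alpha$ there is exactly $|\beta_j| > r\sigma\mu_n$.

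The role of $\delta = 3$ is to verify the sample-size hypothesis $n = \xi(\alpha,\delta)\,\bar k\tau_1^2$. With this choice one has $f(\delta) = 1/(1-1/2)^2 = 4$. Using $\mu_n^2 = 2\log p/n$, $\tau_1^2 = 2\rho^2(1+a)\log p$, and $r^2 = 4r_2^2\rho^2(1+a)$, a short calculation reduces the second term inside the maximum defining $\xi$ to
\[
\frac{\sigma^2 r_2^2 f(\delta)}{\bar k\,\alpha} \;=\; \frac{4r_2^2}{\bar k\, r^2\mu_n^2} \;=\; \frac{n}{2\bar k\rho^2(1+a)\log p} \;=\; \frac{n}{\bar k\tau_1^2}.
\]
Meanwhile, the first term $(1+\delta)r_1 = 4r_1$ is dominated whenever $n \geq \bar\xi\,\bar k\log p$, since this bound forces $n/(\bar k\tau_1^2) \geq 16 r_2^2 \geq 16r_1$, where $r_2 \geq \sqrt{r_1}$ is immediate from (\ref{equationr2gaus}). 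Hence $\xi\,\bar k\tau_1^2 = n$, Theorem \ref{thm:gauss} applies, and the corollary follows. The case $k=0$ is just the $k=0$ clause of Theorem \ref{thm:gauss}. No substantive obstacle is anticipated; the argument is essentially a parameter-tuning exercise, and the only care required is bookkeeping around the two terms inside the max defining $\xi$.
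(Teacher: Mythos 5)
Your proof is correct and follows essentially the same route as the paper: both deduce the corollary by specializing Theorem \ref{thm:gauss} with a suitable choice of the free parameters $(\alpha,\delta)$ in $\xi(\alpha,\delta)$. The only difference is bookkeeping — the paper sets $\alpha(\delta)=\sigma^2/[(1+\delta)\bar k]$ and uses monotonicity of $(1+\delta)f(\delta)$ to find some $\delta\geq 3$ with $n=\rho^2\xi(\delta)\bar k\tau^2$ exactly, then bounds $\alpha(\delta)\leq r^2\sigma^2\mu_n^2$, whereas you fix $\delta=3$ and $\alpha=r^2\sigma^2\mu_n^2$ outright and note that the second term in the max defining $\xi$ then equals $n/(\bar k\tau_1^2)$ automatically, which is an equally valid (and arguably cleaner) way to meet the hypothesis $n=\xi\,\bar k\tau_1^2$.
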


Corollary \ref{cor:gaussnoiselevel} gives strong performance guarantees for the OMP under an incoherence property on the correlation matrix and an $\ell_1$ constraint on the smaller coefficients. From (\ref{gausssqrtk}), one sees that the larger coefficients, that is, those with magnitude $\Omega(\sqrt{k}\mu_n)$, are contained in $\hat S$ with high probability. Better performance can be demonstrated when all $\beta_j$'s, for $j \in S$, have magnitude $\Omega(\mu_n)$. 
In this case, it is possible to recover  $S$, while ensuring that there are no false positives. This is in a sense ideal, since it is nearly what one would expect in the orthogonal design case discussed in the beginning of Section \ref{sec:results}. In this case, assuming $\hat S$ is as in (\ref{zjhats}), one sees that in order to prevent false positives, $t$ needs to be $\Omega(\mu_n)$. Thus $|\beta_j|$, for $j \in S$, also needs to be $\Omega(\mu_n)$, with a slightly larger constant, to ensure  $\hat S = S$. For example, if the $|\beta_j|$'s, for $j \in S$, is  at least $\tilde t = (\maxinf + 2\sqrt{1 + a})\sigma\mu_n$, then it is not hard to see that the probability $\hat S = S$ is at least $1 - 2/p^a$.  Of course, the factor of $r\sigma$ obtained here, is larger than the corresponding factor for the orthogonal case, since the $X$ matrix is  in general quite far from being orthogonal; indeed, it is singular when $p > n$.



As a consequence of the above, we state results demonstrating strong oracle inequalities for parameter estimation under the $\ell_2$-loss.

\subsubsection{Oracle inequalities under $\ell_2$-loss}\label{subsec:paramest}
Let $\hat \beta$ be the coefficient estimate obtained after running the algorithm. More explicitly,  $(\hat \beta_j : j \in \hat S)$ is simply the least squares estimate when $Y$ is regressed on $X_{\hat S}$ and $\hat \beta_j = 0$ for $j \in \hat S^c$. 

We assume that the correlation matrix $\Sigma$ satisfies (\ref{eqincoherencecond}), that is,
   \eqq{\gamma(\Sigma) \leq \maxcorr/(2\bar k) \label{eqincoherencecondparam},}
 where $0\leq \maxcorr < 1$.

For simplicity, we consider the case that $\beta$ satisfies (\ref{sassump}) with $\maxinf =1$, that is,
 \eqq{S = \{j : |\beta_j| > \sigma\mu_n\} \quad \mbox{and}\quad \|\beta_{S^c}\|_1 \leq \sigma\sprs\mu_n\label{eqbetasimpsuffcondp},}
 where $|S| = k$ and $\sprs$ is allowed to grow at most linearly with $\bar k$, that is $\bar\sprs = \sprs/\bar k$ is $O(1)$.  With $\maxinf = 1$, $S$ denotes the set of indices greater than the noise level.

For the above values of $\sprs,\, \maxcorr$ and with $\maxinf = 1$, evaluate the quantities $s_{min},\, s_{max}$ as well as $\tilde \maxcon,\, \maxcon$ and $\maxcorrlone$ using expressions (\ref{derivsminmax}) and (\ref{derivkappa}). Evaluate $r_2$ as in (\ref{equationr2gaus}),  where the quantities $\lambda,\, \lambda_{min},\, \lambda_{max}$ are calculated using equations (\ref{lambdaminmaxgaus}, \ref{lambdagauss}). Further, let $\bar \xi$ and $r$ be as in Corollary \ref{cor:gaussnoiselevel}. Then we have the following.


\begin{thm}\label{thm:paramestl1sparse} Let (\ref{eqincoherencecondparam}) and (\ref{eqbetasimpsuffcondp})  hold. For fixed such $\beta$, if $$n \geq \bar \xi\, \bar k\log p,$$ then the  following holds  with probability at least $1 - \perrp{k}$:
\eqq{\|\hat \beta - \beta\|^2 \leq  C\sum_{j=1}^p \min\left(\beta_j^2, \sigma^2 \mu_n^2\right)\label{betal1sparsebddimp},}
where $C = (4/9)r^2$.
\end{thm}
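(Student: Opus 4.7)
First observe that the right-hand side of (\ref{betal1sparsebddimp}) splits as
\[
\sum_{j=1}^p\min(\beta_j^2,\sigma^2\mu_n^2) = k\sigma^2\mu_n^2 + \|\beta_{S^c}\|^2,
\]
since under (\ref{eqbetasimpsuffcondp}) with $\maxinf=1$ we have $|\beta_j|>\sigma\mu_n$ exactly on $S$ and $|\beta_j|\le\sigma\mu_n$ on $S^c$. The proof then has two stages: (i) use Corollary~\ref{cor:gaussnoiselevel} to sandwich $\hat S$ inside $S$ and to control $\|\beta_{\hat F}\|^2$ where $\hat F=S\setminus\hat S$; (ii) carry out a least-squares perturbation analysis on $\hat S$ to bound $\|\hat\beta_{\hat S}-\beta_{\hat S}\|^2$.

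\textbf{Stage (i).} Apply Corollary~\ref{cor:gaussnoiselevel} with $\maxinf=1$. On an event of probability at least $1-\perrp{k}$, $\hat S\subseteq S$. Tracking the Corollary's choice of $n$ back through (\ref{xidef})--(\ref{nsuffcond1}) with $\delta=3$ and using $r_2^2\ge r_1$, one verifies that the hypotheses of Theorem~\ref{thm:gauss} hold with $\alpha=r^2\sigma^2\mu_n^2$. Its signal-strength inequality then gives
\[
\|\beta_{\hat F}\|^2 \le r^2\sigma^2\mu_n^2\,|\hat F| \le r^2\sigma^2 k\mu_n^2.
\]
Since $\hat S\subseteq S$ implies $\hat S^c=\hat F\sqcup S^c$,
\[
\|\hat\beta-\beta\|^2 = \|\hat\beta_{\hat S}-\beta_{\hat S}\|^2 + \|\beta_{\hat F}\|^2 + \|\beta_{S^c}\|^2,
\]
and the last two terms are already absorbed by the target bound (the $\|\beta_{\hat F}\|^2$ term supplies a contribution of order $r^2k\sigma^2\mu_n^2$, and $\|\beta_{S^c}\|^2$ appears unaltered).

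\textbf{Stage (ii).} The normal equations $X_{\hat S}^{\trn}(Y-X_{\hat S}\hat\beta_{\hat S})=0$, combined with $Y=X_{\hat S}\beta_{\hat S}+X_{\hat F}\beta_{\hat F}+X_{S^c}\beta_{S^c}+\epsilon$, give
\[
\hat\beta_{\hat S}-\beta_{\hat S} = (X_{\hat S}^{\trn}X_{\hat S})^{-1}X_{\hat S}^{\trn}(X_{\hat F}\beta_{\hat F}+X_{S^c}\beta_{S^c}+\epsilon).
\]
Letting $P_{\hat S}$ denote orthogonal projection onto $\mathrm{col}(X_{\hat S})$ and using the uniform sparse-eigenvalue lower bound of Condition~1 (valid on the good event for every subset of $S$, in particular $\hat S$),
\[
\|\hat\beta_{\hat S}-\beta_{\hat S}\|^2 \le \frac{1}{n\lambda_{\min}}\bigl\|P_{\hat S}(X_{\hat F}\beta_{\hat F}+X_{S^c}\beta_{S^c}+\epsilon)\bigr\|^2.
\]
A weighted triangle inequality then reduces matters to three pieces: (a) $\|P_{\hat S}X_{\hat F}\beta_{\hat F}\|^2\le\|X_{\hat F}\beta_{\hat F}\|^2\le n\lambda_{\max}\|\beta_{\hat F}\|^2$, delivered by Condition~1 applied to $\hat F\subseteq S$ and the bound from Stage (i); (b) $\|P_{\hat S}\epsilon\|^2$, controlled by a $\chi^2$ tail because, conditionally on $X$, the vector $P_{\hat S}\epsilon$ lives in a subspace of dimension at most $k$, with a union bound over subsets of $S$ absorbing the data-dependence of $\hat S$; and (c) $\|P_{\hat S}X_{S^c}\beta_{S^c}\|^2\le\|X_{S^c}\beta_{S^c}\|^2$.

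\textbf{Main obstacle.} The delicate piece is (c): converting the $\ell_1$ bound $\|\beta_{S^c}\|_1\le\sigma\sprs\mu_n$ into $\ell_2$ control on $X_{S^c}\beta_{S^c}$. Concentration of the sample Gram matrix $X_{S^c}^{\trn}X_{S^c}/n$ around $\Sigma_{S^cS^c}$, combined with the incoherence hypothesis (\ref{eqincoherencecondparam}), yields a pointwise bound of the form
\[
\tfrac{1}{n}\|X_{S^c}\beta_{S^c}\|^2 \;\le\; \|\beta_{S^c}\|^2 + \tfrac{\maxcorr}{2\bar k}\|\beta_{S^c}\|_1^2 + (\text{sample deviation}),
\]
and since $\bar\sprs=\sprs/\bar k=O(1)$, the cross term is at most $O(\sigma^2\bar k\mu_n^2)$, matching the $k\sigma^2\mu_n^2$ budget on the right-hand side. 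Aggregating contributions (a), (b), (c) together with the direct terms $\|\beta_{\hat F}\|^2$ and $\|\beta_{S^c}\|^2$, and optimizing the weights in the triangle inequality applied to the three-term decomposition, produces the claimed constant $C=(4/9)r^2$ in (\ref{betal1sparsebddimp}). The combinatorial bookkeeping in (c) is the only step whose quantitative form is nontrivial; everything else is assembly of the pieces already present in Stage (i) and in Condition~1.
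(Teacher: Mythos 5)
Your proposal diverges from the paper's proof in Stage (ii), and the divergence introduces genuine gaps. The paper does not perform a fresh least-squares perturbation analysis on $\hat S$. Instead it observes that on the event $\hat S \subseteq S$ the estimate $\hat\beta_S$ coincides with the output of the OMP run on the truncated problem $Y = X_S\beta_S + \tilde\epsilon$, and then invokes part (b) of Lemma \ref{zhnglem} (with $\tau_0=\tau_1$, $\trt=r_2$), whose hypotheses were already verified in Lemma \ref{anallem}. That lemma bounds $\|\hat\varphi-\varphi\|$ \emph{as a whole} — it packages $\|\hat\beta_{\hat S}-\beta_{\hat S}\|$ and $\|\beta_{\hat F}\|$ together, and crucially it exploits the stopping criterion (inequality (\ref{algostop})) to do so. The computation $\tau_1\sqrt{k/n}\le 1/(4r_2)$ then gives $\|\hat\beta_S-\beta_S\|\le \tfrac{4}{3}r_2\rho\sqrt{1+a}\,\sigma\sqrt{k}\mu_n=\sqrt{C}\sigma\sqrt{k}\mu_n$, which is exactly where $C=(4/9)r^2$ comes from.

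The concrete problems with your route are these. First, the constant cannot come out right. Your Stage (i) bounds $\|\beta_{\hat F}\|^2\le r^2\sigma^2 k\mu_n^2$ via the $\alpha$-inequality of Corollary \ref{cor:gaussnoiselevel}, but the total budget available from the $S$-part of the target is $C k\sigma^2\mu_n^2=(4/9)r^2 k\sigma^2\mu_n^2$; so this single term already exceeds the claimed bound by a factor $9/4$ before the pieces (a), (b), (c) contribute anything. The paper's tighter control of $\|\beta_{\hat F}\|$ comes precisely from Lemma \ref{zhnglem}(b), not from the Corollary. Second, pieces (b) and (c) are not assembly work: the $\chi^2$ control of $\|P_{\hat S}\epsilon\|^2$ with a union bound over the $2^k$ subsets of $S$ changes both the constants and the failure probability (which must remain $\perrp{k}$ as stated), and the $\ell_1\!\to\!\ell_2$ conversion for $\tfrac1n\|X_{S^c}\beta_{S^c}\|^2$ requires uniform concentration of the $(p-k)\times(p-k)$ sample Gram matrix, which is established nowhere in the paper and is not a routine consequence of (\ref{eqincoherencecondparam}). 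Your outline would at best yield a bound of the form (\ref{betal1sparsebddimp}) with a larger, differently structured constant, and only after supplying these missing arguments; it does not prove the theorem as stated.
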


The above theorem is essentially the analog of similar results for the Lasso  \cite[Corollary 6.1]{zhang2009some} and Dantzig selector \cite[Theorem 1.2]{candes2007dantzig}.
Note, the latter assumes that $\beta$ is $k$-sparse. Our results are more general since we only assume that the $\ell_1$ norm of the smaller coefficients satisfies a certain bound.
We proceed to state the corollary of the result assuming $\beta$ is $k$-sparse.

For $k$-sparse $\beta$, we only assume that (\ref{eqincoherencecondparam}) holds. Take $\sprs = \bar k$, so that $\bar \sprs = 1$. Evaluate $r_2$ using this values of $\sprs$, and with $\maxinf =1$, and call it $r_2^*$, that is,
\eqq{r_2^* = \left[(1 - \maxcorr)\left(\maxcorr + \sqrt{\frac{2 + \maxcorr}{\lambda_{min}}}\right) + \sqrt{r_1}\right]\label{r2ksparse},}
where once again, the quantities $r_1$ and $\lambda_{min}$ as calculated using (\ref{equationr1r2}, \ref{lambdaminmaxgaus}) and equations (\ref{derivsminmax}) and (\ref{derivkappa}). Further, let $\xi^*$ have the same expression as $\bar \xi$, except it is evaluated using $r_2^*$ instead of $r_2$. Similarly, let $r^*= 2 r_2^*\rho \sqrt{1 +a}$. Then we have the following.

\begin{cor}\label{thm:paramestksparse} Let (\ref{eqincoherencecondparam}) hold and let $\beta$ be a fixed $k$-sparse vector, for some $k \geq 0$. If $$n \geq \xi^*\, \bar k\log p,$$ then for $C_1 = (4/9)(r^*)^2$, the  following holds except on a set with probability $\perrp{k}$:
\eqq{\|\hat \beta - \beta\|^2 \leq  C_1\sum_{j=1}^p \min\left(\beta_j^2, \sigma^2 \mu_n^2\right).\label{betaksparsebddimp}}
\end{cor}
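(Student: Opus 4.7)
The plan is to derive this corollary as a direct specialization of Theorem \ref{thm:paramestl1sparse}. The essential task is to verify that a $k$-sparse vector $\beta$, together with the incoherence hypothesis (\ref{eqincoherencecondparam}), falls within the hypotheses of Theorem \ref{thm:paramestl1sparse} after an appropriate choice of the $\ell_1$-budget parameter $\sprs$, and then to check that the constants $r_2^{*}$, $\xi^{*}$, and $r^{*}$ defined for the corollary agree with those produced by the theorem in this regime.

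First I would identify, for fixed $k$-sparse $\beta$ with support $S_0$ of size $k$, the set $S = \{j : |\beta_j| > \sigma\mu_n\}$ prescribed by (\ref{eqbetasimpsuffcondp}) (with $\maxinf=1$). Since any nonzero $\beta_j$ with $j\in S^c$ has magnitude at most $\sigma\mu_n$, and there are at most $k$ such indices (because $S^c\cap S_0 \subseteq S_0$), one obtains
\eqs{\|\beta_{S^c}\|_1 \;\leq\; |S_0\cap S^c|\cdot\sigma\mu_n \;\leq\; k\,\sigma\mu_n \;\leq\; \bar k\,\sigma\mu_n.}
Thus (\ref{eqbetasimpsuffcondp}) holds with $\sprs = \bar k$, i.e.\ $\bar\sprs = 1$. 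Note also $|S|\leq k$, and since $\perrp{\cdot}$ in (\ref{prkgauss}) is increasing in its argument, the failure probability bound $\perrp{|S|}$ delivered by the theorem is dominated by $\perrp{k}$.

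Next I would substitute $\bar\sprs=1$ and $\maxinf=1$ into (\ref{derivkappa}), giving $\tilde\maxcon=\maxcorr$ and $\maxcon=1+\maxcorr$, and then plug these into expression (\ref{equationr2gaus}) for $r_2$. A direct computation yields
\eqs{r_2 \;=\; (1-\maxcorr)\!\left(\maxcorr+\sqrt{\tfrac{2+\maxcorr}{\lambda_{min}}}\right)+\sqrt{r_1},}
which is precisely $r_2^{*}$ as defined in (\ref{r2ksparse}). Because $\bar\xi$ and $r$ in Corollary \ref{cor:gaussnoiselevel} (which are inherited by Theorem \ref{thm:paramestl1sparse} through the constant $C$) depend on the other parameters only through $r_2$, this immediately gives $\bar\xi = \xi^{*}$ and $r=r^{*}$, so $C = (4/9)r^2 = (4/9)(r^{*})^2 = C_1$.

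With the hypotheses of Theorem \ref{thm:paramestl1sparse} now verified and the constants identified, the bound (\ref{betaksparsebddimp}) follows directly from (\ref{betal1sparsebddimp}) under $n\geq \xi^{*}\bar k\log p$. The main ``obstacle'' is simply the bookkeeping in matching constants through definitions (\ref{equationr2gaus}), (\ref{derivkappa}), and (\ref{r2ksparse}); there is no new probabilistic argument required beyond what Theorem \ref{thm:paramestl1sparse} already supplies.
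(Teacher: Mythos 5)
Your proposal is correct and follows essentially the same route as the paper: choose $S = \{j : |\beta_j| > \sigma\mu_n\}$, observe that the at most $\bar k$ nonzero coefficients outside $S$ each have magnitude at most $\sigma\mu_n$ so that $\sprs = \bar k$ (i.e.\ $\bar\sprs = 1$) works in (\ref{eqbetasimpsuffcondp}), and then invoke Theorem \ref{thm:paramestl1sparse}. Your explicit verification that the constants collapse to $r_2^*$, $\xi^*$, $r^*$, and $C_1$ is just the bookkeeping the paper leaves implicit, and it checks out.
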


We now proceed to give proofs of our main results. The proofs employs techniques developed in  \citet{zhang2009consistency} and \citet{tropp2007signal}.

\section{Proof of results in Subsection \ref{subsec:subgaus}}\label{sec:proofsubgauss}

\begin{proof}[Proof of Theorem \ref{thm:subgauss}]
The following statistics will be useful in our analysis. Denote,
\eqq{\Zcal_{i} = \max_{j \in S_0}|\Zcal_{ij}|\quad\mbox{and}\quad \tilde{\Zcal}_i  = \max_{j \in S_0^c}|\Zcal_{ij}| \label{zcalmaxes}}
Notice if $\Zcal_i > \tau$ and $\Zcal_i > \tilde{\Zcal}_i$, then the index detected in step $i$, that is $a(i)$, belongs to $S$.

We first prove for the case $k \geq 1$. Let $\me$ be the event that statement (\ref{eqthmsubgausscond}) in Theorem \ref{thm:subgauss} does not hold. We want to show that the probability of $\me$ is small.
 There are two types of errors that we wish to control. Let $\me_1$ be the event that $\hat S$ in not contained in $S_0$. Further, let $\me_2$ be the event that $\hat S$ is contained is $S_0$, however $\sum_{j \in \hat F}\beta_j^2 > \alpha|\hat F|$. Clearly, $\me = \me_1 \cup \me_2$.

We use an argument similar to that used in \citet{tropp2007signal}. We initially pretend that $X = X_{S_0}$ and that the coefficient vector $\beta$ is shortened to a $k\times 1$ vector $\beta_{S_0}$ with all non-zero entries. Notice that $Y = X_{S_0}\beta_{S_0} + \epsilon$. For a given threshold $\tau$, we run the algorithm on this truncated problem. Let $m \leq k$ be the number of steps and let $\tilde R_1,\, \tilde R_2,\ldots,\, \tilde R_m$ be the associated residuals after each step. Also, denote as $\tilde R_0$ the vector $Y$. Notice that $m,\,\tilde R_0,\,\tilde R_1,\, \ldots,\, \tilde R_m$ are functions of $A = [X_{S_0} : \epsilon]$.

Let $\me_u$ be the event that statement (\ref{eqthmsubgausscond}) does not hold for the truncated problem. More explicitly, taking $\hat S_1 =\hat S(Y, X_{S_0}, \tau)$ and $\hat F_1 = S_0 - \hat S_1$, it is the event that $\|\beta_{\hat F_1}\|^2 > \alpha|\hat F_1|$.

Denote $T_i = \max_{j \in S_0}\left|X_j^{\trn}\tilde R_{i-1}/\|\tilde R_{i-1} \|\right|$ and $\tilde T_i = \max_{j \in S_0^c}\left|X_j^{\trn}\tilde R_{i-1}/\|\tilde R_{i-1} \|\right|$, for $i = 1,\,\ldots,\, m+1$. Notice that the statistics $T_i,\, \tilde T_i$ are similar to $\Zcal_i,\, \tilde{\Zcal}_i$, the only difference being that the residuals involved in the former arise from running the algorithm on the truncated problem, whereas in the latter they arise from consideration of the original problem. Further, let $\me_f$ be the event
$$\mathcal{E}_f =\left\{\mbox{$\tilde T_i > \tau,\,\tilde T_i \geq T_i$ for some $i\leq m + 1$}\right\}.$$
We now show that $\me \subseteq \me_u \cup \me_f$. To see this, write $\me$ as a disjoint union
$\me_1\cup\tilde{\me}_2$, where  $\tilde{\me}_2 = \me_2 \cap \me_1^c$. Let's first consider the case that $\tilde{\me}_2$ occurs. Clearly this means that $\me_u$ has occurred if the algorithm were run on the truncated problem for the given $A$.

 Next, consider the case that $\me_1$ occurs. Let $R_0,\, R_1\ldots \mbox{etc.}$ be the residuals for the original problem (\ref{eqone}), for the given realization of $[X : \epsilon]$. Let $i^*$ be the step for which the false alarm occurs for the first time. Clearly, $i^* \leq m+1$, since otherwise it would mean that the truncated problem (with $X = X_{S_0}$) ran  for more than $m$ steps.
Also, we must have $\{\Zcal_i > \tau,\,\Zcal_i > \tilde{\Zcal}_i \}$ occur for $1\leq i \leq i^* -1$ and $\{\tilde{\Zcal}_{i^*} > \tau, \tilde{\Zcal}_{i^*} \geq \Zcal_{i^*}\}$ occur. Correspondingly, one sees that $R_0 = \tilde R_0,\,\ldots,\, R_{i^* - 1} =\tilde R_{i^* - 1}$, which implies that
$T_{i^*} = \Zcal_{i^*}$ and $\tilde{T}_{i^*} = \tilde{\Zcal}_{i^*}$. Consequently, as $\{\tilde{T}_{i^*} > \tau,\, \tilde{T}_{i^*} \geq T_{i^*}\}$ occurs, $\me_f$ occurs. Hence, $\mathcal{E} \subseteq \mathcal{E}_u \cup \mathcal{E}_{f}$ which gives, $$\PP(\mathcal{E}) \leq \PP(\mathcal{E}_u) + \PP(\mathcal{E}_{f}).$$ 
Consequently, all we are left with is to bound the probabilities of $\me_f$ and $\me_u$.

We first bound the probability of $\me_f$. For this, notice that $\me_f \subseteq \me_f'$, where $\me_f' = \{\max_{1 \leq i\leq m + 1 }\tilde T_i > \tau\}$. Since  $X_{S_0^c}$ is independent of $A = [X_{S_0} : \epsilon]$, one has that $X_{S_0^c}$ is independent of $\tilde R_1,\ldots,\tilde R_m$. Correspondingly, from Lemma \ref{subgaus} (a), conditional on $A$, we have that $X_j^{\trn}\tilde R_i/\|\tilde R_i\|$ is sub-gaussian with mean 0 and scale 1, for $j \in S_0^c$ and $1\leq i \leq m  + 1$. Consequently, using standard results on the maximum of sub-Gaussian random variables (Lemma \ref{subgaus} (b)), if $\tau$ be as in (\ref{tauformintro}),  one gets that  $\PP(\mathcal{E}_f | A) \leq 2(m + 1)/p^a$, using $|S_0^c| \leq p$. Since  $ m \leq k$, this probability is bounded by $2(k +1)/p^a$, which implies $\PP(\mathcal{E}_f) \leq  2(k  + 1)/p^a$. 

Next, we bound the probability of $\me_u$. For this, consider a linear model of the form,
\eqq{U = H\varphi + w\label{truncmodel},} where $H$ is an $n \times k$ matrix satisfying, $w$ an $n \times 1$ vector and $\varphi$ a $k \times 1$ dimensional coefficient vector. After running the OMP on this model (with $Y = U,\, X = H$ and threshold $\tau_0$), let $\hat S_2 = \hat S(U,\, H,\, \tau_0)$ be the estimate of the support. Further, let $\hat \varphi$ be the coefficient estimate obtained, that is, $(\hat \varphi_j : j \in \hat S_2)$ is the least squares estimate when $U$ is regressed on $H_{\hat S_2}$ and $\hat \varphi_j = 0$ for $j$ not in $\hat S_2$. We use the following Lemma, the proof of which is similar to the analysis in \citet{zhang2009consistency}.

\begin{lem} \label{zhnglem} For the model (\ref{truncmodel}), let the following hold.
\begin{enumerate}[(i)]
   \item Condition 1 holds for $H$, that is the eigenvalues of $H^{\trn}H/n$ are between $\lambda_{min}$ and $\lambda_{max}$.
  \item Condition 2 holds for $w$, that is $\|w\|^2 \leq n\sigma^2\lambda$, for some $\lambda >0$.
   \item $\|\hat \varphi_{ls} - \varphi\|_{\infty} \leq \sigma c_0\tau_0/\sqrt{n}$, for some constant $c_0 >0$, where $\hat \varphi_{ls}$ is the coefficient vector of the least square fit of $U$ on $H$.
\end{enumerate}
  Under the above, if  the OMP is run with $Y= U$, $X = H$ and threshold $\tau_0$, when the algorithm stops we must have the following,
  \begin{enumerate}[(a)]
  \item
  \eqq{\left(1 - \tau_0\sqrt{r_1 k/n}\right)\|\varphi_{\hat F_2}\| \leq \trt\sigma\tau_0\sqrt{\frac{|\hat F_2|}{n}}\label{stopineq},
  }
  where  $\hat F_2 = \{1,\ldots, k \} - \hat S_2$, denotes the indices not detected after running the algorithm. Further, $r_1$ has the same form as (\ref{equationr1r2}), replaced with the above values of $\lambda_{min},\, \lambda_{max}$ and $\lambda$. Also, $\trt = c_0 + \sqrt{r_1}.$
  \item
      \eqq{\|\hat \varphi - \varphi\| \leq \frac{\trt\sigma\tau_0\sqrt{k/n}}{1 - \tau_0\sqrt{r_1 k/n}}.}
  \end{enumerate}
\end{lem}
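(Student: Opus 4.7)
The plan is to combine the OMP termination condition with a Frisch--Waugh partitioned-regression identity to produce a self-referential linear inequality for the quantity of interest, which can then be solved. At termination, the stopping rule gives $|H_j^T R|/\|R\|\leq\tau_0$ for every $j$, while the least-squares normal equations give $H_{\hat S_2}^T R=0$. Hence $\|H_{\hat F_2}^T R\|\leq\tau_0\|R\|\sqrt{|\hat F_2|}$ and $\|H^T R\|=\|H_{\hat F_2}^T R\|$. The residual decomposes as $R=(I-P_{\hat S_2})(H_{\hat F_2}\varphi_{\hat F_2}+w)$, so conditions (i) and (ii) give the crude bound $\|R\|\leq\sqrt{n\lambda_{max}}\|\varphi_{\hat F_2}\|+\sigma\sqrt{n\lambda}$.

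For part (a), let $\tilde H=(I-P_{\hat S_2})H_{\hat F_2}$. A minimum-energy / Schur-complement argument (minimize $\|Hv\|^2$ over $v$ with $v_{\hat F_2}$ fixed) shows that the eigenvalues of $\tilde H^T\tilde H/n$ still lie in $[\lambda_{min},\lambda_{max}]$. Using $(I-P_{\hat S_2})H_{\hat S_2}=0$ and $\tilde H^T(I-P_{\hat S_2})=\tilde H^T$ gives the Frisch--Waugh identities $H_{\hat F_2}^T R=\tilde H^T\tilde H\,\varphi_{\hat F_2}+\tilde H^T w$ and $(\hat\varphi_{ls})_{\hat F_2}-\varphi_{\hat F_2}=(\tilde H^T\tilde H)^{-1}\tilde H^T w$. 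Inverting the first identity then yields
\[
\|\varphi_{\hat F_2}\|\;\leq\;(n\lambda_{min})^{-1}\,\|H_{\hat F_2}^T R\|\,+\,\|(\hat\varphi_{ls})_{\hat F_2}-\varphi_{\hat F_2}\|.
\]
By condition (iii) and $\|v\|\leq\sqrt{|\hat F_2|}\|v\|_\infty$ the second piece is at most $\sqrt{|\hat F_2|}\sigma c_0\tau_0/\sqrt{n}$, and the first piece is at most $(n\lambda_{min})^{-1}\tau_0\|R\|\sqrt{|\hat F_2|}$. Substituting the $\|R\|$ estimate produces a linear inequality in $\|\varphi_{\hat F_2}\|$; weakening $|\hat F_2|\leq k$ only inside the coefficient of $\|\varphi_{\hat F_2}\|$ on the right, and bounding $\sqrt{\lambda_{max}}/\lambda_{min}$ and $\sqrt{\lambda}/\lambda_{min}$ above by $\sqrt{r_1}$ via the definition (\ref{equationr1r2}), gives exactly (\ref{stopineq}).

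For part (b), start from $H(\hat\varphi-\varphi)=w-R$ to write $\hat\varphi-\varphi=(\hat\varphi_{ls}-\varphi)-(H^T H)^{-1}H^T R$. The first term's norm is at most $\sqrt{k}\sigma c_0\tau_0/\sqrt{n}$ by (iii); the second is at most $(n\lambda_{min})^{-1}\tau_0\|R\|\sqrt{|\hat F_2|}$ by (i), using once more that $H_{\hat S_2}^T R=0$. Substituting the bound on $\|R\|$ together with the crude inequality $\|\varphi_{\hat F_2}\|\leq\|\hat\varphi-\varphi\|$ produces a self-referential linear inequality in $\|\hat\varphi-\varphi\|$ whose solution is (b), by the same conversion to $r_1$ and $\trt$ as in (a).

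The main obstacle is the self-referential step: one must verify that the coefficient $\tau_0\sqrt{r_1k/n}$ multiplying the unknown on the right is strictly less than one so that it can be moved to the left, which in applications of the lemma is ensured by the hypothesis $n=\xi\bar k\tau^2$ with $\xi\geq(1+\delta)r_1$. Beyond this, the one subtle point is the bookkeeping---keeping the tighter factor $\sqrt{|\hat F_2|}$ on the right of (a) while allowing $|\hat F_2|\leq k$ only where it weakens the coefficient of the unknown, and converting the ratios $\sqrt{\lambda_{max}}/\lambda_{min}$, $\sqrt{\lambda}/\lambda_{min}$ cleanly into $\sqrt{r_1}$ via $r_1=\max\{\lambda_{max},\lambda\}/\lambda_{min}^3$.
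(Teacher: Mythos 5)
Your proof is correct, and it takes a genuinely different route from the paper's. The paper imports Zhang's progress inequality (its Lemma \ref{zhanglem}, i.e.\ Lemmas 6--7 of \citet{zhang2009consistency}), which lower-bounds $\max_{j\in u(i)}|H_j^{\trn}R_i|/\|H_j\|$ by $\sqrt{\lambda_{min}}\,\|\fit_{d(i)}-\fit_S\|/\sqrt{|u(i)|}$, and then converts $\|\fit_{\hat S}-\fit_S\|$ into $\|\hat\varphi-\hat\varphi_{ls}\|$ and finally into $\|\varphi_{\hat F_2}\|$ via the triangle inequality $\|\varphi_{\hat F_2}\|\le\sqrt{|\hat F_2|}\,\|\varphi-\hat\varphi_{ls}\|_\infty+\|\hat\varphi_{ls}-\hat\varphi\|$. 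You instead work directly from the normal equations $H_{\hat S_2}^{\trn}R=0$ and the termination condition, using the Frisch--Waugh identity $H_{\hat F_2}^{\trn}R=\tilde H^{\trn}\tilde H\varphi_{\hat F_2}+\tilde H^{\trn}w$ together with the Schur-complement fact that the eigenvalues of $\tilde H^{\trn}\tilde H/n$ stay in $[\lambda_{min},\lambda_{max}]$. Both routes land on the same self-referential linear inequality, but yours is self-contained (no appeal to Zhang's lemmas) and its constants are in fact slightly sharper: you obtain the coefficient $\sqrt{\lambda_{max}}/\lambda_{min}$ and the noise factor $\sqrt{\lambda}/\lambda_{min}$, where the paper's chain of bounds produces $\sqrt{\max\{\lambda_{max},\lambda\}}/\lambda_{min}^{3/2}=\sqrt{r_1}$. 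The one bookkeeping point worth flagging is that the final weakening of your constants to the stated $\sqrt{r_1}$ requires $\lambda_{min}\le 1$; this holds in every application in the paper (the columns are normalized so that the average diagonal of $H^{\trn}H/n$ is essentially one, forcing $\lambda_{min}\le 1$), but it is not literally part of hypothesis (i), so you should state it when converting to the lemma's advertised form of $r_1$ and $\trt$. Everything else --- the residual bound $\|R\|\le\sqrt{n\lambda_{max}}\|\varphi_{\hat F_2}\|+\sigma\sqrt{n\lambda}$, the identity $(\hat\varphi_{ls})_{\hat F_2}-\varphi_{\hat F_2}=(\tilde H^{\trn}\tilde H)^{-1}\tilde H^{\trn}w$, the use of $\|\varphi_{\hat F_2}\|\le\|\hat\varphi-\varphi\|$ in part (b), and the placement of $|\hat F_2|\le k$ only in the coefficient of the unknown --- checks out.
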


The above lemma is proved in Appendix \ref{pf:zhnglem}. We only require the conclusions in part (a) of the lemma for the time being. Part (b) will be required of Subsection \ref{subsec:paramest} to get bounds on $\ell_2$-error of the coefficient estimate.

Now apply Lemma \ref{zhnglem} to the truncated problem, that is, with $H = X_{S_0},\, \varphi = \beta_{S_0}$, $U = Y$ and $\tau_0 = \tau$.  Notice that in this case $\hat F_2 = \hat F_1$ and  $\hat S_2 = \hat S_1$. We know that requirements (i) and (ii) of the Lemma \ref{zhnglem} hold, except on a set $\me_{cond}$. The following lemma shows that (iii) holds with high probability.

\begin{lem}\label{maxbdd} Let $\hat \beta_{ls}$ be the least squares fit when $Y$ is regressed on $X_{S_0}$. Further, let $$\me_{ls} = \{\|\hat \beta_{ls} - \beta_{S_0}\|_\infty > \sigma c_0\tau/\sqrt{n}\},$$ where $c_0 = 1/\sqrt{\lambda_{min}}$. Then $\PP\left(\me_{ls}\cap\me_{cond}^c\right) \leq 2k/p^{1+a}$.
\end{lem}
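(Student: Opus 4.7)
Plan of proof. Since $\beta_j = 0$ for $j \notin S_0$, the model reduces to $Y = X_{S_0}\beta_{S_0} + \epsilon$, and the least squares residual has the clean form
\eqs{\hat \beta_{ls} - \beta_{S_0} = (X_{S_0}^{\trn}X_{S_0})^{-1} X_{S_0}^{\trn}\epsilon.}
Write $M = (X_{S_0}^{\trn}X_{S_0})^{-1}X_{S_0}^{\trn}$ and let $M_j^{\trn}$ denote its $j$-th row. The $j$-th coordinate of $\hat \beta_{ls} - \beta_{S_0}$ is then the linear functional $M_j^{\trn}\epsilon$ of the noise, so bounding $\|\hat \beta_{ls}-\beta_{S_0}\|_\infty$ reduces to controlling $|M_j^{\trn}\epsilon|$ uniformly in $j\in S_0$.

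The next step is to bound the Euclidean norm of each $M_j$ on the event $\me_{cond}^c$. A direct computation gives
\eqs{\|M_j\|^2 = e_j^{\trn}(X_{S_0}^{\trn}X_{S_0})^{-1}e_j,}
and on $\me_{cond}^c$ Condition 1 forces the eigenvalues of $X_{S_0}^{\trn}X_{S_0}/n$ to lie in $[\lambda_{min},\lambda_{max}]$, so $\|M_j\|^2 \leq 1/(n\lambda_{min})$.

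Because $\epsilon$ is independent of $X$ and has independent sub-Gaussian entries with scale $\sigma$, conditionally on $X_{S_0}$ the variable $M_j^{\trn}\epsilon$ is sub-Gaussian with scale $\sigma\|M_j\|\leq \sigma/\sqrt{n\lambda_{min}} = \sigma c_0/\sqrt{n}$. The standard sub-Gaussian tail bound (Lemma \ref{subgaus}) then yields, for any $t>0$ and on $\me_{cond}^c$,
\eqs{\PP\bigl(|M_j^{\trn}\epsilon| > t \bigm| X_{S_0}\bigr) \leq 2\exp\!\left(-\frac{n\lambda_{min} t^2}{2\sigma^2}\right).}
Plug in $t = \sigma c_0\tau/\sqrt{n}$, so that $n\lambda_{min}t^2/(2\sigma^2) = \tau^2/2$, and use $\tau^2 = 2(1+a)\log p$ from \eqref{tauformintro} to obtain the conditional bound $2/p^{1+a}$ for each fixed $j$.

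Finally, a union bound over the $k$ coordinates $j \in S_0$, followed by integration over the $X_{S_0}$ marginal restricted to $\me_{cond}^c$, gives
\eqs{\PP\bigl(\me_{ls}\cap \me_{cond}^c\bigr) \leq 2k/p^{1+a},}
which is the desired conclusion. No step here is particularly delicate: the only mild care is to invoke Condition 1 before applying the sub-Gaussian tail bound, so that the scale $\|M_j\|$ is genuinely controlled by a deterministic constant on the event under consideration; once that is in hand the calculation is a one-line union bound calibrated to the choice of $\tau$.
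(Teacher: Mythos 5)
Your proposal is correct and follows essentially the same route as the paper: express $\hat\beta_{ls}-\beta_{S_0}$ as $(X_{S_0}^{\trn}X_{S_0})^{-1}X_{S_0}^{\trn}\epsilon$, note that conditionally on $X_{S_0}$ each coordinate is sub-Gaussian with scale $\sigma\sqrt{e_j^{\trn}(X_{S_0}^{\trn}X_{S_0})^{-1}e_j} \leq \sigma/\sqrt{n\lambda_{min}}$ on the event where Condition 1 holds, and conclude by the sub-Gaussian maximal/union bound calibrated to $\tau^2 = 2(1+a)\log p$. The only cosmetic difference is that you spell out the per-coordinate Chernoff bound and union bound where the paper invokes Lemma \ref{subgaus}(b) directly; the content is identical.
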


The above lemma is proved after this proof. Using the above lemma, all requirements of Lemma \ref{zhnglem} hold, except on a set
$\tilde{\mathcal{E}}_u =\me_{cond}\cup\me_{ls}$, the probability of which is bounded by $\PP(\me_{cond}) + 2k/p^{1+a}$. We now show that $\mathcal{E}_u \subseteq
\tilde{\mathcal{E}}_u$. We do this by showing $\tilde{\me}_u^c\subseteq \me_u^c$. To see this, notice that on $\tilde{\mathcal{E}}_u^c$, one has
\eqq{\left(1 - \tau\sqrt{r_1 k/n}\right)\|\beta_{\hat F_1}\| \leq \trt\sigma\tau\sqrt{\frac{|\hat F_1|}{n}}\label{uiupp}.}
from \eqref{stopineq}. Assume that $\hat F_1$ is non-empty, since otherwise the claim is trivially true.
 Notice that since $n \geq (1 + \delta) r_1 \bar k  \tau^2$ from \eqref{nsuffcondsubgaus}, one has $\tau\left(\bar k r_1/n\right)^{1/2} \leq 1/\sqrt{1 + \delta}$. Now, since $k \leq \bar k$, the left side of (\ref{uiupp}) is non-negative. Thus, (\ref{uiupp})
can be reexpressed as,
$$\|\beta_{\hat F_1}\|^2 \leq (\sigma^2r_2^2f(\delta)\tau^2/n)|\hat F_1|,$$
which follows from noticing that $r_2 = \trt$, where $r_2$ is as in \eqref{equationr1r2}.
Now, since $n \geq \sigma^2r_2^2 f(\delta)\tau^2/\alpha$, the left side of the above is at most $\alpha|\hat F_1|$. Thus, $\sum_{j \in \hat F_1} \beta_j^2 \leq \alpha |\hat F_1|$
on $\tilde{\me}_u^c$, which implies that $\me_u \subseteq \tilde{\me}_u$.  Consequently, $\PP(\me_u) \leq \PP(\me_{cond}) + 2k/p^{1 + a}$. Accordingly, since $\PP(\me) \leq \PP(\me_u) + \PP(\me_f)$,  one has $\PP(\me) \leq \PP(\me_{cond}) + 2k/p^{1 + a} + 2(k + 1)/p^a$, which is equal to $\perr{k}$. This completes the proof for the case $k \geq 1$.

For the case $k = 0$, we just need to show that the algorithm stops after the first step, in which case $\hat S = \emptyset$. This is immediately seen by noticing that for $k=0$, one has that  $\Zcal_{1j}$, for $j \in J$, are sub-gaussian with mean 0 and scale 1. Correspondingly, from Lemma \ref{subgaus}(b), the event $\{\max_{j \in J} |\Zcal_{1j}| > \tau\}$ has probability at most $\perr{0} = 2/p^a$.
\end{proof}

\begin{proof}[Proof of Lemma \ref{maxbdd}]
Firstly, note that $\hat \beta_{ls} - \beta_{S_0}$ can be expressed as $Z = (X_{S_0}^{\trn}X_{S_0})^{-1}X_{S_0}^{\trn}\,\epsilon$. Let $Z = (Z_j : j = 1,\ldots, k)$. Now, conditioned on $X_{S_0}$, each $Z_j$ is sub-gaussian with mean 0 and scale $\sigma_j = \sigma\sqrt{e_j^{\trn}(X_{S_0}^{\trn}X_{S_0})^{-1}e_j}$. Here, $e_j$ is the $j$ th column of the size $k$ identity matrix. Correspondingly, from Lemma \ref{subgaus}(b), one gets $\max_j |Z_j|$ is less than $(\max_j \sigma_j)\tau$, except on a set with probability $2k/p^{1+a}$. Finally, observe that on $\me_{cond}^c$, one has $e_j^{\trn}(X_{S_0}^{\trn}X_{S_0})^{-1}e_j \leq 1/(n\lambda_{min})$, since the maximum eigenvalue of $(X_{S_0}^{\trn}X_{S_0}/n)^{-1}$ is at most $1/\lambda_{min}$. Thus, $\max_j \sigma_j\tau$ is at most $\sigma c_o\tau/\sqrt{n}$, with $c_0 = 1/\sqrt{\lambda_{min}}$.
\end{proof}

\begin{proof}[Proof of Corollary \ref{cor:subgaussnoiselevel}] Take $\alpha(\delta) = \sigma^2/[(1 + \delta)\bar k]$. Further, let $\xi(\delta) = \xi(\alpha(\delta),\, \delta)$, which, using $r_2^2 \geq r_1$ and $f(\delta) \geq 1$, can be written as,
\eqq{\xi(\delta) = (1 + \delta)f(\delta)r_2^2\label{xidel}.}
The function $(1 + \delta)f(\delta)$, for $\delta >0$, has its minimum at $\delta^* = 3$. Further, it is increasing and goes to infinity as $\delta$ tends to infinity. Now, using $\xi(\delta^*) = 16r_2^2$, notice that $\xi(\delta^*) \bar k\tau^2 =
\bar\xi\, \bar k\log p$. Correspondingly, since $n \geq \bar\xi\, \bar k \log p$, one gets that
\eqq{n = \xi(\delta) \bar k\tau^2\label{naltsubgauss},}
for some $\delta \geq \delta^*$. Consequently, from Theorem \ref{thm:subgauss}, one has,
 \eqq{\hat S \subseteq S_0 \quad\mbox{and}\quad \displaystyle\sum\limits_{j \in \hat F} \beta_j^2 \leq \alpha(\delta)|\hat F|\label{eqafterrunnoisesubgaus},}
with probability at least $1 - \perr{k}$.
Use $f(\delta) \leq f(\delta^*) = 4$, to get from (\ref{naltsubgauss}) that $n \leq (1 + \delta)r \bar k\tau^2$. Correspondingly, $\alpha(\delta)$ is at most $r^2\sigma^2\mu_n^2$. Consequently, any $j$, with $|\beta_j| > r\sigma\sqrt{k}\mu_n$ cannot be in $\hat F$ since it would contradict the inequality in (\ref{eqafterrunnoisesubgaus}).
Further, if $\beta_{min} > r\sigma\mu_n$, the inequality in (\ref{eqafterrunnoisesubgaus}) cannot hold if $\hat F$ is non-empty. In this case the algorithm recovers the entire support.
\end{proof}

\section{Proof of results in Subsection \ref{subsec:gauss}} \label{sec:proofguass}

\begin{proof}[Proof of Theorem \ref{thm:gauss}]
Once again, we first prove for the case $k \geq 1$. As before, we are interested in bounding the probability of $\me$, where $\me = \me_1\cup\me_2$. Here $\me_1$ is the event that $\hat S$ is not contained in $S = S(\beta)$. Also, $\me_2$ is the event $\hat S \subseteq S$ and $\|\beta_{\hat F}\|^2 > \alpha|\hat F|$, where, here $\hat F = S - \hat S$ and $\hat S = \hat S(Y, X, \tau_1)$. Write $Y$ as $Y = X_S \beta_S + \tilde \epsilon$, where $\tilde \epsilon = X_{S^c}\beta_{S^c} + \epsilon$.
Analogous to before, we initially pretend that $X = X_S$ and $\beta = \beta_S$ and run the algorithm on the truncated problem to get residuals $\tilde R_0,\, \tilde R_1,\, \tilde R_2,\ldots,\, \tilde R_m$. These residuals are functions of $A = [X_S : \tilde\epsilon]$.
Further, as before, let $\me_u$ be the event that statement (\ref{eqthmgausscond}) is not met for this truncated problem.  With $\hat S_1 = \hat S(Y, X_S, \tau_1)$ and $\hat F_1 = S - \hat S_1$, it is the event that
$\|\beta_{\hat F_1}\|^2 > \alpha|\hat F_1|$. Similarly, we define $T_i,\, \tilde T_i$ as before, now with the maximum taken over $S$ instead of $S_0$. Further, define the event $\me_f$ analogous to before, with $\tau$ replaced by $\tau_1$. Using the same reasoning as in Theorem \ref{thm:subgauss}, one has $\me \subseteq  \me_u \cup \me_f$. We first proceed to bound the probability of $\me_f$. Notice that unlike previously, the $X_j$'s, for $j \in S^c$, are not independent of the $\tilde R_i$'s.  This makes bounding the probability of $\me_f$ more involved.

The following lemma will be useful, both in bounding $\PP(\me_f)$ as well as $\PP(\me_u)$. We denote as $\hat \beta_{ls}$ the least square estimate when $Y$ is regressed on $X_S$.

\begin{lem}\label{anallem} Parts (i)-(iii) of this lemma demonstrate that requirements (i)-(iii) of Lemma \ref{zhnglem} are satisfied with high probability.
\begin{enumerate}[(i)]
  \item   With $\lambda_{min}, \, \lambda_{max}$ as in (\ref{lambdaminmaxgaus}), the following holds with probability at least
 1 - 2/p:
\eqq{\lambda_{min}\|v\|^2 \leq \|X_S v\|^2/n \leq \lambda_{max}\|v\|^2 \quad\mbox{for all}\quad v \in \Rbb^k. \label{isocov}}
\item Let $\lambda$ be as in (\ref{lambdagauss}). Then
$\|\tilde \epsilon\|^2/(n \sigma^2) \leq \lambda$,
with probability at least $1 - 1/p$.
\item  Let $\me_{ls} =\{\|\hat \beta_{ls} - \beta_S\|_\infty > \sigma c_0\tau_1/\sqrt{n}\}$, where
\eqq{c_0 = (1 - \maxcorrlone)\left[\tilde\maxcon + \sqrt{\frac{1 + \maxcon\bar \sprs}{\lambda_{min}}}\right]\label{cgauss}}
Then $\PP(\me_{cond}^c\cap \me_{ls}) \leq (\sqrt{2/\pi})k/(\tau p^{1+a})$, where  $\me_{cond}$, here, is the event that (i) or (ii) above fails. From (i) and (ii) it has probability at most $3/p$.
\end{enumerate}
\end{lem}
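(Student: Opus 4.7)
The plan is to handle each of the three parts in turn, deferring the bookkeeping-heavy (iii) to the end. For part (i), I would factor $X_S = W \Sigma_{SS}^{1/2}$, where $W$ is $n\times k$ with i.i.d.\ $N(0,1)$ entries, and apply the Davidson--Szarek singular-value bound: with probability at least $1 - 2e^{-t^2/2}$, the singular values of $W/\sqrt{n}$ lie in $[1 - \sqrt{k/n} - t/\sqrt{n},\, 1 + \sqrt{k/n} + t/\sqrt{n}]$. Taking $t = \sqrt{2\log p}$ makes $t/\sqrt{n} = \mu_n$, so the eigenvalues of $W^\trn W/n$ lie in $[h_\ell, h_u]$ with probability at least $1 - 2/p$. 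For any $v \in \Rbb^k$, combining $\|X_S v\|^2 = \|W\Sigma_{SS}^{1/2}v\|^2$ with the Rayleigh bound $s_{min}\|v\|^2 \leq v^\trn\Sigma_{SS}v \leq s_{max}\|v\|^2$ from (\ref{eigenpop}) immediately yields (\ref{isocov}) with $\lambda_{min} = s_{min}h_\ell$ and $\lambda_{max} = s_{max}h_u$.

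For part (ii), because $X_{S^c}$ has i.i.d.\ $N(0,\Sigma_{S^cS^c})$ rows independent of $\epsilon$, the entries of $\tilde\epsilon = X_{S^c}\beta_{S^c}+\epsilon$ are i.i.d.\ $N(0,v^2)$ with $v^2 = \sigma^2 + \beta_{S^c}^\trn\Sigma_{S^cS^c}\beta_{S^c}$. Using the Schur identity $\Sigma_{S^cS^c} = \Sigma_{S^c|S} + \Sigma_{S^cS}\Sigma_{SS}^{-1}\Sigma_{SS^c}$ I would split the quadratic form. H\"older combined with (\ref{betalinfconstrain}) and (\ref{betascl1}) gives $\beta_{S^c}^\trn\Sigma_{S^c|S}\beta_{S^c} \leq \|\beta_{S^c}\|_1\|\Sigma_{S^c|S}\beta_{S^c}\|_\infty \leq \sigma^2\maxcon\bar\sprs\,\bar k\mu_n^2 \leq \sigma^2\maxcon\bar\sprs$, the last step using $n \geq 2\bar k\log p$ to make $\bar k\mu_n^2 \leq 1$. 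For the remaining piece, with $u = \Sigma_{SS}^{-1}\Sigma_{SS^c}\beta_{S^c}$, the estimate $u^\trn\Sigma_{SS}u \leq s_{max}\|u\|^2 \leq s_{max}k\|u\|_\infty^2 \leq s_{max}\sigma^2\tilde\maxcon^2$ follows from (\ref{betalinfconstrain}) and $k\mu_n^2 \leq 1$. Hence $v^2/\sigma^2 \leq 1 + s_{max}^2\tilde\maxcon^2 + \maxcon\bar\sprs$ (upper bounding $s_{max}\leq s_{max}^2$ since $s_{max}\geq 1$), and a standard Laurent--Massart chi-squared tail at $x=\log p$ gives $\|\tilde\epsilon\|^2/v^2 \leq n(1+\mu_n)^2 \leq n(1+\bar k^{-1/2})^2$ with probability at least $1-1/p$.

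For part (iii), the key is to decouple the $X_S$ and $X_{S^c}$ randomness by the regression $X_{S^c} = X_S\Sigma_{SS}^{-1}\Sigma_{SS^c} + W$, where $W$ is independent of $X_S$ with rows i.i.d.\ $N(0,\Sigma_{S^c|S})$. Substituting into $\hat\beta_{ls} - \beta_S = (X_S^\trn X_S)^{-1}X_S^\trn\tilde\epsilon$ produces the decomposition
\eqs{\hat\beta_{ls} - \beta_S = \Sigma_{SS}^{-1}\Sigma_{SS^c}\beta_{S^c} + (X_S^\trn X_S)^{-1}X_S^\trn(W\beta_{S^c}+\epsilon).}
The first summand has $\ell_\infty$ norm at most $\sigma\tilde\maxcon\mu_n$ by (\ref{betalinfconstrain}). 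For the $j$-th entry of the second, conditional on $X_S$ it is mean-zero Gaussian with variance $(\beta_{S^c}^\trn\Sigma_{S^c|S}\beta_{S^c}+\sigma^2)\,e_j^\trn(X_S^\trn X_S)^{-1}e_j$; on $\me_{cond}^c$ this is at most $\sigma^2(1+\maxcon\bar\sprs)/(n\lambda_{min})$ by the variance bound from (ii) together with $\lambda_{min}(X_S^\trn X_S/n) \geq \lambda_{min}$. Mill's ratio $P(|Z|>\tau) \leq \sqrt{2/\pi}/(\tau p^{1+a})$ and a union bound over $j \in S$ then bound the $k$-dimensional Gaussian maximum by $\sigma\sqrt{(1+\maxcon\bar\sprs)/\lambda_{min}}\,\tau/\sqrt{n}$, off an event of probability $\sqrt{2/\pi}\,k/(\tau p^{1+a})$. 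Summing the two pieces, using $\mu_n \leq \tau/\sqrt{n}$, and invoking the identity $c_0\inftau = \bigl[\tilde\maxcon + \sqrt{(1+\maxcon\bar\sprs)/\lambda_{min}}\bigr]\bigl(\maxcon(1+\bar k^{-1/2})+1\bigr)$ read off from (\ref{cgauss}) and (\ref{tildec}) yields $\|\hat\beta_{ls}-\beta_S\|_\infty \leq \sigma c_0\tau_1/\sqrt{n}$.

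The main obstacle is part (iii): one must cleanly isolate the deterministic bias $\Sigma_{SS}^{-1}\Sigma_{SS^c}\beta_{S^c}$ from the conditional Gaussian fluctuation via the factorization of $X_{S^c}$ given $X_S$, and then verify that the factor $\inftau$ in $\tau_1 = \inftau\tau$ is exactly the inflation needed to absorb both the bias $\sigma\tilde\maxcon\mu_n$ and the Gaussian-maximum contribution into the single bound $\sigma c_0\tau_1/\sqrt{n}$ advertised in (iii).
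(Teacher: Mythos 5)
Your proposal is correct and follows essentially the same route as the paper: the Szarek singular-value bound combined with the Rayleigh quotient for (i), a variance computation reducing to $\sigma^2(1+s_{max}^2\tilde\maxcon^2+\maxcon\bar\sprs)$ plus a chi-square tail for (ii), and for (iii) the same bias-plus-conditional-Gaussian decomposition of $\hat\beta_{ls}-\beta_S$ (your regression of $X_{S^c}$ on $X_S$ is distributionally identical to the paper's conditioning of $\tilde\epsilon$ on $X_S$, which writes $\tilde\epsilon \stackrel{\mathcal{D}}{=} X_S g + \sqrt{\ddd}\,W$). The only cosmetic difference is that in (ii) you compute the marginal variance via the Schur-complement identity rather than via that conditional representation; the resulting bound and the handling of the $\inftau$ inflation in (iii) match the paper's argument.
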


The above lemma is proved in Section \ref{pf:sec:proofgauss}.
As mentioned before,  the $X_j$'s, for $j \in S^c$, are not independent of the $\tilde R_i$'s. We get around this by finding the conditional distribution of each $X_j$ given $X_S$ and $\tilde \epsilon$. Correspondingly, each $X_j$ may be represented as a linear combination of columns in $A = [X_S : \tilde \epsilon]$ plus a noise vector, which we call $Z_j$. This noise term is independent of $A$ and hence $\tilde R_0,\, \tilde R_1,\ldots,\, \tilde R_m$.

 Let $a_j = \Sigma^{-1}_{SS}\Sigma_{Sj}$
 and
\eqq{b_j = \frac{e_j^{\trn}\Sigma_{S^c|S}\beta_{S^c}}{\sqrt{\ddd}} \label{bj},}
where $e_j$ is the $j$th column of the size $p-k$ identity matrix and
\eqq{\ddd = \sigma^2 + \beta_{S^c}^{\trn}\Sigma_{S^c|S}\beta_{S^c} \label{req}.}
 The following lemma characterizes the conditional distribution of $X_j$ given $A$.
\begin{lem}
  \label{lemconddist} Let $a_j,\, b_j$, for $j \in S^c$, be as above. Then we have the following:
  \begin{enumerate}[(i)]
  \item  The distribution of $X_j$, for $j \in S^c$, may be represented as
  \eqq{ X_j \stackrel{\mathcal{D}}{=}   X_S\, a_j + b_j W  + Z_j \label{distxj}}
  where $W \sim N(0, I_n)$ and is independent of $X_S$. Further,
  $Z_j$  is independent of $[X_S : \tilde \epsilon]$ and follows $N(0, \tilde \sigma_{jj} I_n)$, with $\tilde \sigma_{jj} \leq \sigma_{jj} = 1$.
  \item Define, for $j\in S^c$ and $i = 1,\, \ldots,\, m+1$,
   \eqq{V_{ji} =  b_j W ^{\trn}\frac{\tilde R_{i-1}}{\|\tilde R_{i-1}\|} + E_{ji},\label{eq:vji}}
   where $E_{ji} = Z_j^{\trn}\tilde R_{i-1}/\|\tilde R_{i-1}\|$.
   Let,
  \eqq{\tilde{\me}_f = \left\{ \max_{1 \leq i \leq m+1,\, j \in S^c}|V_{ji}| > (1 - \maxcorrlone)\tau_1\right\} \label{meftilde}.}
  Then $\PP(\tilde{\me}_f) \leq 1/p + (\sqrt{2/\pi})(k+1)/(\tau p^a)$.
  \end{enumerate}
\end{lem}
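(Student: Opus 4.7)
The plan is to establish (i) by standard Gaussian conditional regression and then reduce (ii) to tail bounds on a standard Gaussian and a chi random variable, separating the first term of $V_{ji}$ from the possibly dependent residual direction via Cauchy--Schwarz.

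For part (i), I would exploit that the rows of $X$ are i.i.d.\ $N_p(0,\Sigma)$. Row-wise Gaussian regression of $X_{\ell,S^c}$ on $X_{\ell,S}$ gives $X_{S^c} = X_S \Sigma_{SS}^{-1}\Sigma_{SS^c} + U$, where $U$ has rows i.i.d.\ $N_{p-k}(0,\Sigma_{S^c|S})$ and is independent of $X_S$. Restricting to column $j$ yields $X_j = X_S a_j + U_j$. Next I would form the auxiliary vector $\tilde\epsilon' = U\beta_{S^c} + \epsilon$, which is $N(0, d I_n)$ with $d$ as in (\ref{req}), independent of $X_S$, and satisfies $\tilde\epsilon = X_S\Sigma_{SS}^{-1}\Sigma_{SS^c}\beta_{S^c} + \tilde\epsilon'$. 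Setting $W = \tilde\epsilon'/\sqrt{d}$ produces a standard Gaussian independent of $X_S$. Computing the per-row covariance $\mathrm{Cov}(U_{\ell,j},\tilde\epsilon'_\ell) = e_j^{\trn}\Sigma_{S^c|S}\beta_{S^c}$ and performing the Gaussian regression of $U_j$ on $\tilde\epsilon'$ gives the orthogonal decomposition $U_j = b_j W + Z_j$ with $b_j$ as in (\ref{bj}) and $Z_j$ having i.i.d.\ $N(0,\tilde\sigma_{jj})$ entries, where $\tilde\sigma_{jj} = (\Sigma_{S^c|S})_{jj} - (e_j^{\trn}\Sigma_{S^c|S}\beta_{S^c})^2/d \leq (\Sigma_{S^c|S})_{jj} \leq \sigma_{jj} = 1$. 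Joint Gaussianity then gives $Z_j \perp (X_S,W)$, hence $Z_j \perp [X_S : \tilde\epsilon]$.

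For part (ii), I would split $V_{ji}$ into its two summands. The key step is Cauchy--Schwarz on the first one:
\eqs{\bigl| b_j W^{\trn}\tilde R_{i-1}/\|\tilde R_{i-1}\|\bigr| \;\leq\; |b_j|\cdot\|W\|,}
which discards the direction of $\tilde R_{i-1}/\|\tilde R_{i-1}\|$, sidestepping the fact that this direction depends on $W$. Using (\ref{betalinfconstrain}) and $d \geq \sigma^2$ yields $|b_j| \leq \maxcon\mu_n$. Since $\|W\|$ is a chi r.v.\ with $n$ degrees of freedom, Gaussian Lipschitz concentration gives $\PP\bigl(\|W\| > \sqrt{n} + \sqrt{n/\bar k}\bigr) \leq e^{-n/(2\bar k)} \leq 1/p$ under the standing $n \geq 2\bar k\log p$. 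Off this event the first summand is at most $\maxcon(1+\bar k^{-1/2})\mu_n\sqrt{n} \leq \maxcon(1+\bar k^{-1/2})\tau$. For the second summand $E_{ji} = Z_j^{\trn}\tilde R_{i-1}/\|\tilde R_{i-1}\|$, conditioning on $A = [X_S : \tilde\epsilon]$ makes $\tilde R_{i-1}$ deterministic while leaving $Z_j$ a $N(0,\tilde\sigma_{jj} I_n)$ vector with $\tilde\sigma_{jj}\leq 1$; hence $E_{ji}\mid A$ is $N(0,\tilde\sigma_{jj})$. The Mills'-ratio inequality $\PP(|N(0,1)|>t)\leq \sqrt{2/\pi}\,e^{-t^2/2}/t$ at $t = \tau$ gives $\PP(|E_{ji}|>\tau \mid A) \leq \sqrt{2/\pi}/(\tau p^{1+a})$, and a union bound over $j \in S^c$ (at most $p$ terms) and $i \leq m+1 \leq k+1$ yields the $\sqrt{2/\pi}\,(k+1)/(\tau p^a)$ contribution. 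The two bounds combine cleanly: by the choice of $\rho$ in (\ref{tildec}), $(1-\maxcorrlone)\tau_1 = \maxcon(1+\bar k^{-1/2})\tau + \tau$ exactly, so if $\|W\|$ is controlled and every $|E_{ji}| \leq \tau$, then every $|V_{ji}| \leq (1-\maxcorrlone)\tau_1$.

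The main obstacle is that $W$ is \emph{not} independent of the residuals $\tilde R_{i-1}$, because $\tilde\epsilon$ (and therefore $Y$ and every $\tilde R_i$) is a linear function of $W$; consequently $W^{\trn}\tilde R_{i-1}/\|\tilde R_{i-1}\|$ cannot be treated as a standard normal. The Cauchy--Schwarz step bypasses this difficulty at essentially no cost, because the amplitude $|b_j|$ is already forced to be small by the assumption (\ref{betalinfconstrain}) on $\Sigma_{S^c|S}\beta_{S^c}$.
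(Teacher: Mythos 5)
Your proposal is correct and follows essentially the same route as the paper: part (i) is the same Gaussian conditional-regression computation (merely organized as a two-stage regression rather than a direct conditional-covariance calculation, with the same $\tilde\sigma_{jj}$ since $(\Sigma_{S^c|S})_{jj} = \sigma_{jj} - a_j^{\trn}\Sigma_{SS}a_j$), and part (ii) uses the identical Cauchy--Schwarz split $|V_{ji}| \leq |b_j|\,\|W\| + |E_{ji}|$, the same chi-square tail bound giving $\|W\| \leq \sqrt{n}(1+\bar k^{-1/2})$ with probability $1-1/p$, and the same conditioning-on-$A$ union bound for the $E_{ji}$'s.
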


The above lemma is proved in Section \ref{pf:sec:proofgauss}. We now show that $\me_f \subseteq \tilde{\me}_f$. To see this, notice that on $\tilde{\me}_f^c$ one has,
\algg{\tilde T_i &\leq (\max_{j \in S^c}\|a_j\|_1) T_i + (1 - \maxcorrlone)\tau_1\nonumber\\
                 &\leq \maxcorrlone T_i + (1 - \maxcorrlone)\tau_1\label{eq:maxreason},}
 for $i = 1,\,\ldots,\, m+1$.
Here, the first inequality follows from using (\ref{distxj}) and $\left|a_j^{\trn}X_S^{\trn}\tilde R_{i-1}/\|\tilde R_{i-1}\|\right| \leq \|a_j\|_1 T_i$, along with the fact that $|V_{ji}|$ is bounded by $(1 -\maxcorrlone)\tau_1$ on $\tilde{\me}_f^c$. The second inequality follows from (\ref{irrepresentablecond}). We now show that $$\me' = \left\{\tilde T_i \leq \maxcorrlone T_i + (1 - \maxcorrlone)\tau_1\,\,\mbox{ for each $\,\,i\leq m+1$}\right\}$$ implies $\me_f^c$.  To see this, for each $i$, consider two cases, viz. $T_i > \tau_1$ and $T_i \leq \tau_1$. From \eqref{eq:maxreason}, in the first case one has $\tilde T_i < T_i$,  and in the second case, one has $\tilde T_i \leq \tau_1$. Correspondingly, $\me'$ is contained in $$\{\tilde T_i < T_i\,\,\mbox{or}\,\, \tilde T_i \leq \tau_1\quad\mbox{for each $i\leq m+1$}\},$$
which is $\me_f^c$.
Consequently, $\me_f \subseteq \tilde{\me}_f$. Consequently, $\PP(\me_f) \leq 1/p + (\sqrt{2/\pi})(k+1)/(\tau p^a)$ from Lemma \ref{lemconddist}.

What remains to be seen is that the probability of the event $\mathcal{E}_u$ can be bounded as before. For this we apply Lemma \ref{zhnglem} once again. That conditions (i) - (iii),
required for application of Lemma \ref{zhnglem}, are satisfied with high probability is proved parts (i)-(iii) of Lemma \ref{anallem}. Consequently, as before, if $\tilde{\me}_u = \me_{cond}\cup\me_{ls}$, where the sets on the right side are as in Lemma \ref{anallem}, one gets that on $\tilde{\me}_u^c$,
\eqq{\left(1 - \tau_1\sqrt{r_1 k/n}\right)\|\beta_{\hat F_1}\| \leq \trt\sigma\tau_1\sqrt{\frac{|\hat F_1|}{n}}\label{uiuppgauss}.}
Here $\trt = c_0 + \sqrt{r_1}$, where $c_0$ as in (\ref{cgauss}). Notice that $\trt =  r_2$, where $r_2$ as in \eqref{equationr2gaus}. Now, once again use the fact that $n \geq (1 + \delta)r_1k\tau_1^2$ and $n \geq r_2^2f(\delta)\sigma^2\tau_1^2/\alpha$, to get that (\ref{uiuppgauss}) implies $\me_u^c$. Accordingly, $\PP(\me_u) \leq\PP(\tilde{\me}_u)$.
Consequently, one has,
\alge{\PP(\me) &\leq \PP(\me_u \cup \me_f)\\
&\leq\PP(\me_{cond}) + \PP(\me_{cond}^c\cap\me_{ls}) + \PP(\me_{f}),
}
which is at most $\perrp{k} = 4/p + (\sqrt{2/\pi}/\tau)\left[(k+1)/p^a + k/p^{1 +a}\right]$. This completes the proof for $k\geq 1$.

If $k =0$, we will show that the probability that $\max_{j \in J}|\Zcal_{1j}|$ exceeds $\tau_1$ is at most $\perrp{0}$. This would imply that the algorithm stops after one step and $\hat S$ is empty.
Notice that $S^c = J$ and hence $\tilde \epsilon = Y$. Consequently,
 $X_j \stackrel{\mathcal{D}}{=} \tilde b_j Y/\sigma_Y + Z_j$, where $Z_j \sim N(0, \tilde\sigma_j)$ is independent of $Y$, with $\tilde \sigma_j \leq 1$.  Also, $\tilde b_j = e_j^{\trn}\Sigma\beta/\sigma_Y$, where
 $\sigma_Y^2 = \mbox{Var}(Y_1) = \sigma^2 + \beta^{\trn}\Sigma\beta$. Correspondingly,
 \eqq{\Zcal_{1j} \stackrel{\mathcal{D}}{=} \tilde b_j \|Y\|/\sigma_Y + Z_j^{\trn} \frac{Y}{\|Y\|}\label{k0gauss}
 }
 Using $\sigma_Y \geq \sigma$, one has $\tilde b_j \leq \maxcon\mu_n$. Further, using $\|Y\|/\sigma_Y \leq (1 + \mu_n)$, with probability at least $1 - 1/p$ from Lemma \ref{lem:chitail},  one has that the first term in the right side of (\ref{k0gauss}) is at most $\nu_1\tau(1 + \bar k^{-1/2})$ with probability at least $1 -1/p$. Further $|Z_j^{\trn} Y/\|Y\||$, using the independence of $Z_j$ and $Y$, is less than $\tau$ for all $j$ with probability at least $1 - \sqrt{2/\pi}/(\tau p^a)$ (Lemma \ref{subgaus} (b)). Denoting, $\tau_2 = [\nu_1(1 + \bar k^{-1/2}) + 1]\tau$, one sees $\max_{j \in J}|\Zcal_{1j}| \leq \tau_2$, with probability at least $1 - \perrp{0}$. Notice that since $\tau_1 \geq \tau_2$, the event $\max_{j \in J}|\Zcal_{1j}| \leq \tau_1$ also has probability at least $1 - \perrp{0}$. This completes the proof.
 \end{proof}

\begin{proof}[Proof of Corollary \ref{cor:gaussnoiselevel}] The proof is exactly similar to that of Corollary \ref{cor:subgaussnoiselevel}. As before, taking $\alpha(\delta) = \sigma^2/[(1 + \delta)\bar k]$ and $\xi(\delta) = \xi(\alpha(\delta),\, \delta)$, we notice that
$\rho^2\xi(\delta^*) \bar k\tau^2 = \bar\xi\, \bar k\log p$, where $\delta^* = 3$. Correspondingly, if $n \geq \bar\xi\, \bar k\log p$, one has
$n = \rho^2\xi(\delta) \bar k\tau^2 $ for some $\delta \geq \delta^*$ and hence,
 $$\hat S \subseteq S \quad\mbox{and}\quad \displaystyle\sum\limits_{j \in \hat F} \beta_j^2 \leq \alpha(\delta)|\hat F|$$
 with probability at least $1 - \perrp{k}$, from Theorem \ref{thm:gauss}. Further, $\alpha(\delta)$ is at most $r^2\sigma^2\mu_n^2$, using the same reasoning as before.   The conclusions on recovering the large coefficients follow immediately from this.
 \end{proof}

\begin{proof}[Proof of Theorem \ref{thm:paramestl1sparse}] Notice that,
\eqq{\|\hat \beta - \beta\|^2 = \|\hat\beta_S - \beta_S\|^2 + \|\hat\beta_{S^c} - \beta_{S^c}\|^2.\label{paramone}}
We apply the result of Corollary \ref{cor:gaussnoiselevel}, to get that except on a set with probability $\perrp{k}$, one has $\hat S \subseteq S$. Correspondingly, the second term in (\ref{paramone}) is simply $\|\beta_{S^c}\|^2$, which is equal to $\sum_{\j \in S^c} \min\{\beta_j^2,\, \sigma^2\mu_n^2\}$.

Let's next concentrate on the first term in (\ref{paramone}). Notice that since $\hat S \subseteq S$, one has $\hat \beta_S$ is same as the coefficient estimate one would get if the OMP were run on the truncated problem. Correspondingly, using part (b) of Lemma \ref{zhnglem}, with $\tau_0 = \tau_1$ and $\trt = r_2$, one gets that
\eqq{\|\hat\beta_S - \beta_S\| \leq  \frac{r_2\sigma\tau_1\sqrt{k/n}}{1 - \tau_1\sqrt{r_1 k/n}},\label{paramtwo}}
with probability at least $1 - \perrp{k}$. Next, use the fact that $\tau_1\sqrt{k/n} \leq 1/(4r_2)$ using
$\bar \xi\, \bar k \log p = 16r_2^2 \bar k\tau_1^2$. Consequently, the denominator in the right side of (\ref{paramtwo}) is at least $1 - \sqrt{r_1}/4r_2$. The latter is at least $3/4$ using $r_2 \geq \sqrt{r_1}$. Thus,
\algg{\|\hat\beta_S - \beta_S\| &\leq  \frac{4r_2\rho\sqrt{1+a}}{3}\sigma\sqrt{k}\mu_n,\nonumber\\
                                &=  \sqrt{C}\sigma\sqrt{k}\mu_n \label{paramthree},
}
where $C = (4/9)r^2$. Correspondingly, from (\ref{paramone}) one gets that,
\alge{\|\hat \beta - \beta\|^2 &\leq  C\sigma^2 k\mu_n^2 + \sum_{\j \in S^c} \min\{\beta_j^2,\, \sigma^2\mu_n^2\}\\
&\leq C\sum_{\j =1 }^p \min\{\beta_j^2,\, \sigma^2\mu_n^2\},}
where the last inequality from using $\sigma^2 k\mu_n^2 = \sum_{j \in S}\min\{\beta_j^2,\, \sigma^2\mu_n^2\}$, since $S = \{j : |\beta_j| > \sigma\mu_n\}$.
\end{proof}

\begin{proof}[Proof  of Corollary \ref{thm:paramestksparse}] For $k$-sparse $\beta$, once again let $S = \{j : |\beta_j|> \sigma\mu_n\}$. Now $\|\beta_{S^c}\|_1 \leq \sprs\sigma\mu_n$, where $\sprs = \bar k$, since there are at most $\bar k$ non-zero entries outside of $S$, with magnitude at most $\sigma\mu_n$. Now apply Theorem \ref{thm:paramestl1sparse}, with $\sprs = \bar k$ (or $\bar \sprs = 1$) to get the desired result.
\end{proof}

\section{Proof of results from Section \ref{sec:proofguass}}\label{pf:sec:proofgauss}

The following simple lemma will prove useful in proving Lemma \ref{anallem}.

 \begin{lem}\label{concond} Let $\theta_n = \bar k^{1/2}\mu_n$. Conditions (\ref{eigenpop}) - (\ref{betalinfconstrain}) imply the following:

  (i) Let $\ddd$ be as in (\ref{req}). Then $\ddd  \leq \sigma^2(1+ \maxcon\bar \sprs\,\theta_n^2).$

  (ii) $\|\Sigma_{SS}g\|^2 \leq \sigma^2s_{max}^2\tilde\maxcon^2\theta_n^2,$ where $g = \Sigma_{SS}^{-1}\Sigma_{SS^c}\beta_{S^c}$.
 \end{lem}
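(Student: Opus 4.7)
The plan for both parts is a direct application of Hölder's inequality and the operator norm bound, combined with the hypotheses (\ref{eigenpop}), (\ref{betascl1}), and (\ref{betalinfconstrain}).

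For part (i), I would start from the definition $\ddd = \sigma^2 + \beta_{S^c}^{\trn}\Sigma_{S^c|S}\beta_{S^c}$ and apply Hölder to the quadratic form, writing
\eqs{\beta_{S^c}^{\trn}\Sigma_{S^c|S}\beta_{S^c} \;\leq\; \|\beta_{S^c}\|_1 \cdot \|\Sigma_{S^c|S}\beta_{S^c}\|_\infty.}
The $\ell_1$ factor is at most $\sigma\sprs\mu_n$ by (\ref{betascl1}), and the $\ell_\infty$ factor is at most $\sigma\maxcon\mu_n$ by the second inequality in (\ref{betalinfconstrain}). Multiplying and using $\sprs = \bar\sprs\,\bar k$ together with $\bar k\mu_n^2 = \theta_n^2$ produces $\maxcon\bar\sprs\,\sigma^2\theta_n^2$, which yields the claimed bound on $\ddd$.

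For part (ii), the approach is to first control $\|g\|^2$ through $\|g\|_\infty$ and then apply the operator norm bound of $\Sigma_{SS}$. Since $g \in \Rbb^k$, one has $\|g\|^2 \leq k\|g\|_\infty^2 \leq \bar k\|g\|_\infty^2$. By the first inequality in (\ref{betalinfconstrain}), $\|g\|_\infty \leq \sigma\tilde\maxcon\mu_n$, giving $\|g\|^2 \leq \sigma^2\tilde\maxcon^2\theta_n^2$. Finally, because $\Sigma_{SS}$ is symmetric with $\lambda_{\max}(\Sigma_{SS}) \leq s_{\max}$ by (\ref{eigenpop}), we have $\|\Sigma_{SS} g\|^2 \leq s_{\max}^2 \|g\|^2$, yielding the claim.

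There is no real obstacle here: both parts are essentially one-line estimates once the correct pairing of norms (Hölder for (i), operator norm for (ii)) is identified. The only bookkeeping concern is keeping track of the substitution $\sprs = \bar\sprs\,\bar k$ and $\bar k\mu_n^2 = \theta_n^2$ in (i), so that the final bound is expressed in terms of $\bar\sprs$ and $\theta_n$ as stated.
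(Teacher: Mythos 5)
Your proof is correct and follows essentially the same route as the paper: part (i) is the Hölder bound $\beta_{S^c}^{\trn}\Sigma_{S^c|S}\beta_{S^c} \leq \|\Sigma_{S^c|S}\beta_{S^c}\|_\infty\|\beta_{S^c}\|_1$ combined with (\ref{betascl1}) and (\ref{betalinfconstrain}), and part (ii) is the operator-norm bound $\|\Sigma_{SS}g\|^2 \leq s_{max}^2\|g\|^2$ together with $\|g\| \leq \sqrt{k}\,\|g\|_\infty \leq \sqrt{\bar k}\,\sigma\tilde\maxcon\mu_n$. No gaps; your explicit bookkeeping with $\sprs = \bar\sprs\bar k$, $k \leq \bar k$ and $\theta_n^2 = \bar k\mu_n^2$ matches the paper's argument.
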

 \textbf{Remark:} Since we take $n > 2\bar k\log p$,  we have $\theta_n \leq 1$. Accordingly, the above bound holds with $\theta_n$ replaced by 1.

\begin{proof}[Proof of Lemma \ref{concond}] We first prove part (i). Recall that $\ddd =  \sigma^2 + \beta_{S^c}^{\trn}\Sigma_{S^c|S}\beta_{S^c}$. Write $\beta_{S^c}^{\trn}\Sigma_{S^c|S}\beta_{S^c}$ as $\sum_{j \in S^c}\beta_j e_j^{\trn}\Sigma_{S^c|S}\beta_{S^c}$, which can be bounded by $(\|\Sigma_{S^c|S}\beta_{S^c}\|_\infty)\|\beta_{S^c}\|_1$, which is at most $\sigma\maxcon\bar\sprs\,\theta_n^2$ from (\ref{betalinfconstrain}) and (\ref{betascl1}). This completes the proof.

For part (ii) use the fact that
$\|\Sigma_{SS}g\|^2 \leq s_{max}^2\|g\|^2$ from (\ref{eigenpop}) and  $\|g\| \leq\sigma\sqrt{k}\tilde\maxcon\mu_n$ from (\ref{betalinfconstrain}), to complete the proof.
\end{proof}

\begin{proof}[Proof of Lemma \ref{anallem}]  
We use a result in \citet{szarek1991condition}  
that gives tails bounds for the largest and smallest singular values of Gaussian random matrices. Let $U \in \Rbb^{n\times k}$ be a matrix with i.i.d. standard Gaussian entries. Then, for $r >0$, one has,
$$\PP(\lambda_k\left(U/\sqrt{n}\right) > 1 + \sqrt{k/n} + r) \leq e^{-nr^2/2}$$
$$\PP(\lambda_1\left(U/\sqrt{n}\right) < 1 - \sqrt{k/n} - r) \leq e^{-nr^2/2},$$
where $\lambda_k(.)$ and $\lambda_1(.)$ gives the largest and smallest singular values respectively, of an $n \times k$ matrix. Now, taking $r = \mu_n$, one has, using the above, that with probability at $1 -2/p$  the following holds:
$$h_\ell\|v\|^2 \leq \frac{1}{n}\|Uv\|^2 \leq h_u\|v\|^2 \quad\mbox{for all}\quad v \in \Rbb^k.$$
Now, notice that since $X_S \stackrel{\mathcal{D}}{=} U\Sigma_{SS}^{1/2}$, one has from the above that, with probability at least $1 - 2/p$,
$$h_\ell\|\Sigma_{SS}^{1/2}v\|^2 \leq \frac{1}{n}\|X_Sv\|^2 \leq h_u\|\Sigma_{SS}^{1/2}v\|^2 \quad\mbox{for all}\quad v \in \Rbb^k.$$
Correspondingly, from (\ref{eigenpop}), since $s_{min} \leq \|\Sigma_{SS}^{1/2}v\|^2/\|v\|^2 \leq s_{max}$, which implies that, with probability at least $1 - 2/p$,
$$\lambda_{min}\|v\|^2 \leq \frac{1}{n}\|X_Sv\|^2 \leq \lambda_{max}\|v\|^2 \quad\mbox{for all}\quad v \in \Rbb^k,$$
where $\lambda_{min},\, \lambda_{max}$  as in (\ref{lambdagauss}).

Before proving parts (ii) and (iii), observe that by conditioning on $X_S$, the distribution of $\tilde \epsilon$ may be expressed as, \eqq{\tilde \epsilon \stackrel{\mathcal{D}}{=} X_S g + \sqrt{\ddd}W,\label{tepsiloncond}}
where $g = \Sigma_{SS}^{-1}\Sigma_{SS^c}\beta_{S^c}$ and $\ddd$ as in (\ref{req}). Here $W \sim N(0, I_n)$ and is independent of $X_S$.

For part (ii), notice that from the above $\tilde\sigma^2 := \mbox{Var}(\tilde\epsilon_1)  = \|\Sigma_{SS}g\|^2 + \ddd$, which is at most $\sigma^2(1 + s_{max}^2\tilde\maxcon^2 + \maxcon\bar\sprs)$ from Lemma \ref{concond}. Further, $\|\tilde\epsilon\|^2/\tilde\sigma^2 \sim \Chi^2_n$. Now from Lemma \ref{lem:chitail}, the probability of the event $\|\tilde\epsilon\|^2/(n\tilde\sigma^2) > (1 + \mu_n)^2$ is bounded $1/p$. Use $\mu_n \leq \bar k^{-1/2}$ and $\tilde \sigma^2 \leq \sigma^2(1 + s_{max}^2\tilde\maxcon^2 + \maxcon\bar\sprs)$, to get that $\PP\left(\|\tilde\epsilon\|^2/(n\sigma^2) > \lambda\right) \leq 1/p,$ where $\lambda$ as in (\ref{lambdagauss}).

For part (iii), notice that $\hat \beta_{ls} - \beta_S =(X_S^{\trn}X_S)^{-1}X_S^{\trn}\tilde\epsilon$, which using (\ref{tepsiloncond}), can be expressed as,
\eqq{\hat \beta_{ls} - \beta_S  \stackrel{\mathcal{D}}{=} g + \sqrt{\ddd}(X_S^{\trn}X_S)^{-1}X_S^{\trn}W.\label{lsdiffdist}}
Let $\tilde{\me}_{ls} = \{\sqrt{\ddd}\|(X_S^{\trn}X_S)^{-1}X_S^{\trn}W\|_\infty > \sigma\sqrt{1 + \maxcon\bar\sprs}\tau/\sqrt{\lambda_{min}n}\}$. Now, since $W$ is independent of $X_S$, and $\ddd \leq \sigma^2(1 + \maxcon\bar\sprs)$, one can use the same logic as in the proof of Lemma \ref{maxbdd} to get that,
$\PP(\me_{cond}^c\cap\tilde{\me}_{ls})\leq \sqrt{2/\pi}k/(\tau p^{1+a})$.
Further, $\|g\|_\infty  \leq \sigma\tilde\maxcon\mu_n$ using (\ref{betalinfconstrain}), which, using $\mu_n \leq\tau/\sqrt{n}$, is at most $\sigma\tilde\maxcon\tau/\sqrt{n}$.   Accordingly, on
$\me_{cond}^c\cap\tilde{\me}_{ls}^c$, one has,
\alge{\|\hat \beta_{ls} - \beta_S\|_\infty &\leq \sigma\left[\tilde\maxcon + \sqrt{\frac{1 + \maxcon\bar\sprs}{\lambda_{min}}}\right]\tau/\sqrt{n},\\
&=\sigma \frac{c_0}{\sqrt{n}} \frac{\tau}{1 -\maxcorrlone},}
where $c_0$ as in (\ref{cgauss}). Now use $\tau/(1 - \maxcorrlone) \leq \tau_1$, to get that $\PP(\me_{cond}^c\cap\me_{ls}) \leq \sqrt{2/\pi}k/(\tau p^{1+a})$. This completes the proof of the lemma.
\end{proof}

\begin{proof}[Proof of Lemma \ref{lemconddist}] We first prove part (i). Recall, from (\ref{tepsiloncond}), one has,
$\tilde \epsilon \stackrel{\mathcal{D}}{=} X_S g + \sqrt{\ddd}W$, where $g = (\Sigma_{SS})^{-1}\Sigma_{SS^c}\beta_{S^c}$ and $\ddd$ as in (\ref{req}). Further, $W$ is independent of $X_S$ and follows $N(0, I_n)$.
Correspondingly, the conditional distribution of $X_j$ given $[X_S : W]$ may be expressed as,
$$X_j \stackrel{d}{=} X_S a_j  + b_j W + Z_j$$
where $a_j = \mbox{Cov}(X_{1,S},\, X_{1j})[\mbox{Var}(X_{1,S})]^{-1}$  and $b_j = \mbox{Cov}(X_{1j}, W_1)$. Further, $Z_j \sim N(0, \tilde\sigma_{jj} I_n)$ and is independent of $X_S$ and $W$, with
$$\tilde \sigma_{jj} = \sigma_{jj} - a_j^{\trn}\Sigma_{SS}a_j - b_j^2,$$
which is at most 1.
Clearly, the expression for $a_j$ matches that given in the statement of the lemma. Further, from (\ref{tepsiloncond}), one has that,
$$\mbox{Cov}(X_{1j}, W_1) = \frac{1}{\sqrt{\ddd}}\left[\mbox{Cov}(X_{1j}, \tilde\epsilon_1) - \mbox{Cov}(X_{1j}, X_{1,S}g)\right].$$
Notice that $\mbox{Cov}(X_{1j}, \tilde\epsilon_1) = \Sigma_{jS^c}\beta_{S^c}$ and
$\mbox{Cov}(X_{1j}, X_{1,S}g) = \mbox{Cov}(X_{1j}, X_{1,S}) g$, which is $\Sigma_{jS}\Sigma_{SS}^{-1}\Sigma_{SS^c}\beta_{S^c}$. Correspondingly, the numerator of the above is $e_j^{\trn}\Sigma_{S^c|S}\beta_{S^c}$, and hence, the expression for $b_j$ given above matches that in (\ref{bj}). 

We now prove part (ii) of Lemma \ref{lemconddist}. Firstly, notice that $\max_{j \in S^c}|b_j| \leq \maxcon\mu_n$. This follows from observing that $\ddd \geq \sigma^2$, from (\ref{req}), and also the fact that $|e_j^{\trn} \Sigma_{S^c|S}\beta_{S^c}| \leq \sigma\maxcon\mu_n$, for all $j \in S^c$, from (\ref{betalinfconstrain}).

Recall the statistic $V_{ji}$ given by (\ref{eq:vji}). One sees that,
 \eqq{|V_{ji}| \leq |b_j|\|W\| + \left|E_{ji}\right|. \label{vjibound}}
Now $\|W\|^2 \sim \Chi_n^2$. Correspondingly, from Lemma \ref{lem:chitail}, the event $\{\|W\|/\sqrt{n} > (1 + \mu_n)\}$ has probability at most $1/p$.

Further, $Z_j$'s  are  independent of $[X_S :\tilde\epsilon]$ and, hence, are also independent of $\tilde R_0, \ldots ,\tilde R_m$, since these residuals are functions of $[X_S :\tilde\epsilon]$. Consequently, the $E_{ji}$'s are standard normal random variables; Indeed, conditional on the $\tilde R_i$'s, they follow $N(0,1)$, and hence, follow the same distribution unconditionally. Accordingly, using the same logic as in the proof of Theorem \ref{thm:subgauss}, the event
\eqq{\left\{\max_{1 \leq i \leq m+1,\, j \in S^c}\left|E_{ji}\right| > \tau\right\}\label{eq:maxejiprob}}
has probability bounded by $\sqrt{2/\pi}(k+1)/(\tau p^a)$.

Consequently, using the bounds on $|b_j|$ and the above, one gets that except on a set with probability $1/p +
\sqrt{2/\pi}(k+1)/(\tau p^a)$, one has
$$\max_{1 \leq i \leq m +1,\, j \in S^c} |V_{ji}| \leq \maxcon\mu_n\sqrt{n}\left(1 + \mu_n\right) + \tau.$$ Using $\tau \geq \mu_n\sqrt{n}$ and $\mu_n \leq \bar k^{-1/2}$, the right side of the above is at most $(1 - \maxcorrlone)\tau_1$. This completes the proof of the lemma.
\end{proof}

\section{Conclusion}\label{sec:conclusion}

The paper analyzed variable selection for the OMP for random $X$ matrices.
  We analyzed performance with i.i.d sub-Gaussian designs, which has uses in compressed sensing. We remark that for these i.i.d designs, the analysis carries over for the hard thresholded version of the algorithm, in which, instead of choosing the $j$ which  maximizes the $|\Zcal_{ij}|$'s, one chooses all $j$ satisfying $|\Zcal_{ij}| > \tau$. It is only when there is some correlation within the rows that we find it advantageous to choose the index which maximizes $|\Zcal_{ij}|$.

For Gaussian designs, with correlation within rows, we give much more general results. Apart from showing that results similar to that in \cite{wainwright2009sharp}, for exact support recovery, are also possible using the OMP, we show additional recovery properties by relaxing the assumption of exact sparsity to a more realistic assumption of a control over the $\ell_1$-norm of the smaller coefficients. Oracle inequalities for the coefficient estimate also followed easily as a consequence of these results.

As mentioned earlier, one drawback of the analysis is the crude manner in which the probability of event \eqref{eq:maxejiprob}, that no terms outside of $S$ are selected, is bounded. This  gives rise to the $\sqrt{2/\pi}(k+1)/(\tau p^a)$ term in the expression for $\perrp{k}$ \eqref{prkgauss}, because of which $a$ has to be greater than 1 when $k$ is not negligible compared to $p$. In \cite{fletcherorthogonal}, a more careful analysis had been carried out for exact recovery with i.i.d. designs and $\ell_0$-sparse vectors. We believe that their analysis should carry over for the general case analyzed here, by noting that the random variables $E_{ji}$, for $i = 1,\ldots, m+1$, defined in Lemma \ref{lemconddist}, has the same covariance structure as a normalized Brownian motion at times $t_1,\ldots,\, t_{m+1}$, where $t_i = \|\tilde R_{i-1}\|^2$. This should improve the probability of the event \eqref{eq:maxejiprob} to something closer to $1/p^a$.

For random designs, we measure the performance after averaging over the distribution of $X$. As mentioned before, this can be contrasted to another method, as done in \citet{candès2009near} for the Lasso, in which a distribution is assigned to $\beta$ and the performance is measured after averaging over this distribution. Although these two methods do not imply each other, it is interesting to compare the average performance using both methods. To be consistent with their notation, let's assume that the entries of $X$ are scaled so that the columns have norm equal (or nearly equal) to one.
Under a mild assumption on the incoherence, it is shown that for $\ell_0$-sparse vectors the support can be recovered, if
\eqq{k = O( p/[\ppp X\ppp^2\log p]),\label{candesplancond}}
where $\ppp X\ppp$ denotes the spectral norm of $X$. If $X$ has i.i.d $N(0, 1/n)$ entries, then $\ppp X\ppp \approx \sqrt{p/n}$, so that the sparsity requirement \eqref{candesplancond} would translate to $k = O(n/\log p)$, which is of the same order as what we get here. However, the situation is different in the general case when the rows are i.i.d $N(0, \Sigma/n)$. Then $X$ may be expressed as $\tilde X\Sigma^{1/2}$, where $\tilde X$ has i.i.d $N(0, 1/n)$ entries. Consider the example where $\Sigma_{ii} = 1$ and $\Sigma_{ij} = c/k$, when $i \neq j$, with $c$ appropriately chosen. In this case $\ppp X\ppp \approx c'p/\sqrt{nk}$. Consequently,  \eqref{candesplancond} translates to assuming $n = \Omega(p\,\log p)$. Our results are better in this case, since we only require $\Omega(k\log p)$ observations even for such correlated designs.

An advantage of the work in \cite{candès2009near} is its applicability to broad classes of deterministic designs. It is unclear at this stage whether such results also hold for the OMP.


\appendices

\section{Tail bounds}

A random variable $Z$ is said to be sub-gaussian with mean 0  and scale $\sigma >0$, if $\E e^{tZ} \leq e^{t^2\sigma^2/2}$ for each $t\in \Rbb$.

\begin{lem} \label{subgaus} Let $W = (W_j : 1 \leq j \leq n)^{\trn}$, with each $W_j$  sub-gaussian with mean 0 and scale $\sigma_j > 0$. Let $\sigma = \max_j \{\sigma_j\}$. The following hold.

\begin{enumerate}[(a)]
\item Let $h \in \Rbb^n$, with $\|h\| \leq 1$. If the entries of $W$ are independent then $h^{\trn}W$ is sub-gaussian with mean 0 and scale $\sigma$.
\item Let $\rho = \sigma\sqrt{2(1 + a)\log p}$ with $a > 0 $. Then $\PS(\max_{j}|W_j| > \rho) \leq 2n/p^{1 + a}$.
Further, if the $W_j \sim N(0, \sigma^2)$ then this probability can be bounded by 
$\sqrt{2/\pi}(\sigma n)/(\rho\,p^{1+a})$.
\end{enumerate}
\end{lem}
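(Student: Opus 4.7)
For part (a), the plan is to work directly from the definition of sub-Gaussianity. Using the independence of the $W_j$'s, one factors
\[
\E e^{t\, h^{\trn}W} = \prod_{j=1}^n \E e^{t h_j W_j} \leq \prod_{j=1}^n e^{t^2 h_j^2 \sigma_j^2/2} \leq e^{t^2 \sigma^2 \|h\|^2/2} \leq e^{t^2\sigma^2/2},
\]
where the first inequality uses the sub-Gaussian bound on each $W_j$, the second uses $\sigma_j \leq \sigma$, and the third uses $\|h\| \leq 1$. This shows $h^{\trn}W$ is sub-Gaussian with mean $0$ and scale $\sigma$.

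For part (b), the key tool is the standard one-dimensional tail bound for a sub-Gaussian variable: Chernoff's inequality applied to $e^{tW_j}$ yields $\PS(|W_j| > t) \leq 2 e^{-t^2/(2\sigma_j^2)} \leq 2 e^{-t^2/(2\sigma^2)}$ for any $t > 0$. Substituting $t = \rho = \sigma\sqrt{2(1+a)\log p}$ gives $\PS(|W_j| > \rho) \leq 2/p^{1+a}$. A union bound over $j = 1,\ldots,n$ then gives the claimed $2n/p^{1+a}$.

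For the sharper Gaussian version, I would invoke the standard Mills-ratio bound: for $Z \sim N(0,1)$, $\PS(|Z| > u) \leq \sqrt{2/\pi}\, e^{-u^2/2}/u$. Applied to $W_j \sim N(0,\sigma^2)$ with $u = \rho/\sigma = \sqrt{2(1+a)\log p}$, this yields $\PS(|W_j| > \rho) \leq \sqrt{2/\pi}\,\sigma/(\rho\, p^{1+a})$, and a union bound over the $n$ coordinates completes the proof.

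None of the steps presents a real obstacle; the lemma is a standard packaging of the Chernoff bound for sub-Gaussian variables together with the Mills-ratio refinement for the Gaussian case. The only thing one needs to be careful about is keeping track of the constants so that the claimed bounds match exactly, in particular noting that the $1/u$ factor in the Mills ratio converts to the $\sigma/\rho$ factor in the stated bound after rescaling from the standard normal.
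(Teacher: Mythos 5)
Your proposal is correct and follows essentially the same route as the paper: the moment-generating-function factorization with $\|h\|\leq 1$ and $\sigma_j\leq\sigma$ for part (a), and a Chernoff bound plus union bound, refined by the Mills-ratio estimate in the Gaussian case, for part (b). Your bookkeeping of the $\sigma/\rho$ factor in the Gaussian refinement matches the stated bound exactly.
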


\begin{proof} For part (a), we need to show that  $\E \exp\{t\, h^{\trn}W\} \leq \exp\{t^2 \sigma^2/2\}$. To see this, notice that $\E \exp\{t\, h^{\trn}W\} = \E\exp\left\{t^2 \sum_{j = 1}^n h_j^2\sigma_j^2/2\right\}$, using independence of $W_j$'s. The claim is proved by noticing that $\sum_{j = 1}^n h_j^2\sigma_j^2/2 \leq \sigma^2$, using $\|h \| \leq 1$ and $\sigma_j \leq \sigma$.

For part (b), use a Chernoff bound, followed by optimizing the exponent to get that,
$$\PP(|W_j| > \rho) \leq 2\exp\left(-\frac{\rho^2}{2\sigma^2}\right).$$
If the $W_j$'s were normal, standard tail bounds \cite{feller1950introduction} reveals that the above bound can be improved to $(2/(\sqrt{2\pi}\rho))\exp\left(-\frac{\rho^2}{2\sigma^2}\right)$. Now use a union bound, along with the fact that $\exp\left(-\frac{\rho^2}{2\sigma^2}\right) = 1/p^{1+a}$, to prove the claim.
\end{proof}

Next we give a simple lemma on chi-square tail bounds, which will be used repeatedly.

\begin{lem}\label{lem:chitail} Let $W$ follow $N(0,\, I_n)$. Then
\eqq{\PP\left(\|W\|/\sqrt{n} \geq  1 + \mu_n\right)\leq 1/p \label{chitail},}
where $\mu_n = \sqrt{(2\log p)/n}$.
\end{lem}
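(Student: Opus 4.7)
The plan is to use Gaussian concentration. The function $f(w) = \|w\|$ is $1$-Lipschitz on $\Rbb^n$, and $W \sim N(0, I_n)$, so by the Borell-TIS / Gaussian concentration inequality one has
\eqs{\PP\!\left(\|W\| \geq \E\|W\| + r\right) \leq e^{-r^2/2} \quad \text{for all } r \geq 0.}

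Next I would bound the mean: by Jensen's inequality, $\E\|W\| \leq \sqrt{\E\|W\|^2} = \sqrt{n}$. Substituting this gives
\eqs{\PP\!\left(\|W\| \geq \sqrt{n} + r\right) \leq e^{-r^2/2}.}

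Finally, set $r = \sqrt{2\log p}$, which is exactly $\sqrt{n}\,\mu_n$. Then $e^{-r^2/2} = 1/p$ and the event on the left becomes $\{\|W\|/\sqrt{n} \geq 1 + \mu_n\}$, giving the desired bound.

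There is no real obstacle here; the only mild point is recalling that $\E\|W\| \leq \sqrt{n}$ rather than using the exact value (which involves Gamma functions), and this slack is harmless. Alternatively, one could get an equivalent conclusion via a direct Chernoff bound on $\|W\|^2 \sim \chi^2_n$ using its MGF $(1-2t)^{-n/2}$ and optimizing over $t$, but the Gaussian-Lipschitz route is the cleanest.
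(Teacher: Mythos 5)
Your proof is correct and amounts to essentially the same argument as the paper's: the paper simply cites the bound $\PP\left(\|W\|/\sqrt{n} \geq 1+h\right) \leq e^{-nh^2/2}$ and substitutes $h=\mu_n$, while you derive that very bound from Gaussian concentration for the $1$-Lipschitz norm together with $\E\|W\|\leq\sqrt{n}$. Nothing further is needed.
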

\begin{proof} Use the fact (see for example \cite{donoho2006most}) that for $h >0$, one has $$\PP\left(\|W\|/\sqrt{n} \geq  1 + h\right)\leq e^{-nh^2/2}.$$
Substitute $h = \sqrt{(2\log p)/n}$ to get the result.
\end{proof}

\section{Proof of Lemma \ref{zhnglem}} \label{pf:zhnglem}

For convenience, let $S = \{1,\ldots,\, k\}$. 
 Let $H_j,\, 1\leq j \leq k$ denote the columns of the $H$ matrix. Assume that the algorithm runs for $m$ steps and let $R_1,\ldots, R_{m-1}$ denote the associated residuals. Let $R_0 = Y$.
 Denote as
$\fit_{\mathcal{A}}$, the least square fit when $U$ is regressed on $H_{\mathcal{A}}$.
We also denote as $u(i) = S - d(i)$, which corresponds to the terms in $S$ undetected after step $i$. We assume $u(0) = S$ and $\fit_{d(0)} = 0$. 

The following lemma is from \citet{zhang2009consistency}.
\begin{lem}\label{zhanglem}(\citet{zhang2009consistency}) For each $i$, with $0 \leq i < m$, if $|u(i)| > 0$, then
$$\max_{j \in u(i)}\left|\frac{H_j^{\trn}R_i}{\|H_j\|}\right| \geq \sqrt{\lambda_{min}}\frac{\|\fit_{d(i)} - \fit_S\|}{\sqrt{|u(i)|}} ,$$
\end{lem}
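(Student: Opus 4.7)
The plan relies on two orthogonality properties built into the OMP together with the full column rank of $H$ supplied by Condition~(i).

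First, by the normal equations for $\fit_{d(i)}$ the residual $R_i = U - \fit_{d(i)}$ satisfies $H_j^{\trn} R_i = 0$ for every $j \in d(i)$; and by the normal equations for the full least-squares fit $\fit_S$ on $H$, the vector $U - \fit_S$ is orthogonal to every column of $H$. Setting $W := \fit_S - \fit_{d(i)}$ and decomposing $R_i = (U - \fit_S) + W$ therefore yields the key identity $H_j^{\trn} R_i = H_j^{\trn} W$ for each $j \in \{1,\ldots,k\}$. In particular, $H^{\trn} W$ also vanishes on $d(i)$, so $W$ sits in the part of $\mathrm{col}(H)$ orthogonal to $\mathrm{col}(H_{d(i)})$.

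Second, because $H$ has full column rank under Condition~(i), I would write $W = H\eta$ uniquely for some $\eta \in \Rbb^k$ and compute
\[
\|W\|^2 \;=\; \eta^{\trn} H^{\trn} H\eta \;=\; \eta^{\trn} H^{\trn} W \;=\; \eta^{\trn} H^{\trn} R_i \;=\; \sum_{j \in u(i)} \eta_j\, H_j^{\trn} R_i,
\]
where the last equality drops the $j \in d(i)$ terms using $H_j^{\trn} R_i = 0$ there. A weighted H\"older/Cauchy--Schwarz step then bounds the right-hand side by
\[
\Big(\max_{j\in u(i)} \tfrac{|H_j^{\trn} R_i|}{\|H_j\|}\Big)\sqrt{|u(i)|}\,\Big(\sum_{j\in u(i)} \eta_j^2\|H_j\|^2\Big)^{1/2}.
\]

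Third, I would bound $\sum_{j\in u(i)} \eta_j^2\|H_j\|^2$ using the lower eigenvalue bound $\|H\eta\|^2 \geq n\lambda_{min}\|\eta\|^2 \geq n\lambda_{min}\|\eta_{u(i)}\|^2$ from Condition~(i), combined with the column-norm bound $\|H_j\|^2 \leq n$ available under the standing scaling of the design columns in the truncated subproblem to which this lemma is applied. Together these give $\sum_{j\in u(i)} \eta_j^2\|H_j\|^2 \leq \|W\|^2/\lambda_{min}$, and rearranging the two-sided inequality produces the claimed
\[
\max_{j\in u(i)}\frac{|H_j^{\trn} R_i|}{\|H_j\|} \;\geq\; \sqrt{\lambda_{min}}\,\frac{\|W\|}{\sqrt{|u(i)|}}.
\]

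The main obstacle I expect is the normalization bookkeeping in the last step: using only the crude bound $\|H_j\|^2 \leq n\lambda_{max}$ from Condition~(i) would introduce a spurious factor of $\sqrt{\lambda_{max}}$ in the denominator, and obtaining exactly $\sqrt{\lambda_{min}}$ as stated requires exploiting that in the context where the lemma is invoked, the columns of $H$ are (sub)columns of a design matrix scaled so that $\|H_j\|^2 \approx n$.
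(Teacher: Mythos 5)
Your argument is correct and essentially reconstructs the proof that the paper does not actually write out: the paper's ``proof'' of this lemma consists of a citation to Lemmas 6 and 7 of Zhang (2009) together with a notation dictionary, whereas you supply the underlying computation. Your route is the standard one behind Zhang's lemmas: write $W=\fit_S-\fit_{d(i)}=H\eta$, use the two normal equations to get $H_j^{\trn}R_i=H_j^{\trn}W$ and the vanishing on $d(i)$, expand $\|W\|^2=\sum_{j\in u(i)}\eta_j H_j^{\trn}R_i$, and close with Cauchy--Schwarz plus the lower sparse-eigenvalue bound. The one point you rightly flag is the only real subtlety: to land on the constant $\sqrt{\lambda_{min}}$ exactly, your last step needs $\|H_j\|^2\le n$ (equivalently, diagonal entries of $H^{\trn}H/n$ at most one), which is how Zhang's setting is normalized but is not literally implied by Condition~(i) here; in this paper's random-design application one only has $\|H_j\|^2/n\in[\lambda_{min},\lambda_{max}]$, so a pedantic reading would replace $\sqrt{\lambda_{min}}$ by $\sqrt{\lambda_{min}/\max\{1,\lambda_{max}\}}$. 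This is a constant-tracking blemish inherited by the paper's statement rather than a flaw in your argument, and it is immaterial downstream since the paper immediately re-absorbs column norms via $\|H_j\|\ge\sqrt{n\lambda_{min}}$ anyway (indeed, your derivation shows the intermediate bound $\max_{j\in u(i)}|H_j^{\trn}R_i|\ge\sqrt{n\lambda_{min}/|u(i)|}\,\|W\|$ directly, which is slightly stronger than what the paper obtains by chaining the two steps).
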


The results is a consequence of Lemmas 6 and 7 in \citet[page 566]{zhang2009consistency}.
Using his notation, in our case, $\lambda_{min} =\rho(\bar F) ,\, R_i = Y - X\beta^{(k-1)},\, \fit_{d(i)} = X\beta^{(k-1)},\, \fit_S = X\beta_X(\bar F,\, y)$ and $u(i) = \bar F - F^{(k-1)}$.

\begin{lem} \label{resbdd} For each $i$, with $0\leq i \leq m$, one has $$\|R_i\|/\sqrt{n} \leq \sqrt{\tilde\lambda_{max}}(\|\varphi_{u(i)}\| + \sigma),$$
where $\tilde\lambda_{max} = \max\{\lambda,\, \lambda_{max}\}$.
\end{lem}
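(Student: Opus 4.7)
The plan is to use the least-squares optimality of the fit $\fit_{d(i)}$, combined with the eigenvalue bound (Condition 1) on $H$ and the noise bound (Condition 2) on $w$. Since $\fit_{d(i)}$ is the $\ell_2$-projection of $U$ onto the column span of $H_{d(i)}$, the residual norm $\|R_i\|$ is at most the norm of $U$ minus any other linear combination of the columns in $d(i)$. In particular, I will compare against the combination that uses the true subvector $\varphi_{d(i)}$.

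First I would write $U = H_{d(i)} \varphi_{d(i)} + H_{u(i)} \varphi_{u(i)} + w$, which gives
\eqs{\|R_i\| \;=\; \|U - \fit_{d(i)}\| \;\leq\; \|U - H_{d(i)}\varphi_{d(i)}\| \;=\; \|H_{u(i)}\varphi_{u(i)} + w\|.}
The triangle inequality then reduces the task to bounding $\|H_{u(i)}\varphi_{u(i)}\|$ and $\|w\|$ separately. For the first term, I would zero-pad $\varphi_{u(i)}$ to a vector $\tilde\varphi \in \Rbb^k$ supported on $u(i)$, so that $H\tilde\varphi = H_{u(i)}\varphi_{u(i)}$ and $\|\tilde\varphi\| = \|\varphi_{u(i)}\|$; then Condition 1 applied to $\tilde\varphi$ gives $\|H_{u(i)}\varphi_{u(i)}\|^2 \leq n\lambda_{max}\|\varphi_{u(i)}\|^2$. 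For the second term, Condition 2 directly yields $\|w\| \leq \sigma\sqrt{n\lambda}$.

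Combining these estimates and dividing by $\sqrt{n}$,
\eqs{\|R_i\|/\sqrt{n} \;\leq\; \sqrt{\lambda_{max}}\,\|\varphi_{u(i)}\| + \sigma\sqrt{\lambda} \;\leq\; \sqrt{\tilde\lambda_{max}}\,(\|\varphi_{u(i)}\| + \sigma),}
using $\tilde\lambda_{max} = \max\{\lambda,\lambda_{max}\}$ in the last step. There is no real obstacle here; the only point worth flagging is that the bound must hold uniformly in $i$, which is automatic since the only inputs ($\varphi_{d(i)}$ as comparison vector, eigenvalue bound, and noise bound) are either tautological or global over the run of the algorithm. The edge case $i=0$ is handled by $u(0) = S$ and $\fit_{d(0)} = 0$, for which the same inequality holds trivially.
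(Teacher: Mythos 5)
Your proof is correct and follows essentially the same route as the paper's: both reduce $\|R_i\|$ to $\|H_{u(i)}\varphi_{u(i)}\| + \|w\|$ (you via least-squares optimality of $\fit_{d(i)}$ against the comparison vector $\varphi_{d(i)}$, the paper via $(I-\Pcal_i)H_{d(i)}=0$ and the contraction property of projections, which are interchangeable facts), and then apply Conditions 1 and 2 identically. No gaps.
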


\begin{proof}[Proof of \ref{resbdd}] Write $R_i = (I - \mathcal{P}_i)U$, where here $\mathcal{P}_i$ is the projection matrix for column space of $H_{d(i)}$. Now $U = H_{d(i)}\varphi_{d(i)} + H_{u(i)}\varphi_{u(i)} + \epsilon$ and
$(I - \mathcal{P}_i)H_{d(i)} =0$. Correspondingly, $R_i = (I - \mathcal{P}_i)[ H_{u(i)}\varphi_{u(i)} + \epsilon]$. Consequently,  $\|R_i\| \leq \|H_{u(i)}\varphi_{u(i)}\| + \|\epsilon\|$, since $\|(I - \mathcal{P}_i)x\| \leq \|x\|$ for any $x \in \Rbb^n$. The result immediately follows from using $\|H_{u(i)}\varphi_{u(i)}\|/\sqrt{n} \leq \sqrt{\lambda_{max}}\|\varphi_{u(i)}\|$
and $\|\epsilon\|/(\sqrt{n}\sigma) \leq \sqrt{\lambda}$. This completes the proof of the lemma.
\end{proof}
Now use the fact that $\|H_j\| \geq \sqrt{n}\sqrt{\lambda_{min}}$, to get from Lemma \ref{zhanglem} that,
$$\max_{j \in u(i)}\left|H_j^{\trn}R_i\right| \geq \sqrt{\frac{n\rho_1}{|u(i)|}} \|\fit_{d(i)} - \fit_S\|,$$
where $\rho_1 = \lambda_{min}^2$.
Consequently, using  Lemma \ref{resbdd} and the above, one has that,
$$\max_{j \in u(i)}\left|H_j^{\trn}\frac{R_i}{\|R_i\|}\right| \geq \sqrt{ \frac{n\rho_2}{|u(i)|}} \frac{\|\fit_{d(i)} - \fit_S\|/\sqrt{n}}{\|\varphi_{u(i)}\| + \sigma},$$
where $\rho_2 = \rho_1/\tilde{\lambda}_{max}$.
The algorithm continues as long as the left side of the above is at least $\tau_0$. Consequently, following the reasoning in \cite{zhang2009consistency},
when the algorithm stops, one must have that either $|\hat F_2| = 0$ or the right side of the above, with $u(i)$ replaced by $\hat F_2$, is at most $\tau_0$. Let's assume that $|\hat F_2| > 0$, since otherwise we would have correctly decoded all terms. Correspondingly, we have,
\eqq{\|\fit_{\hat S} - \fit_S\|/\sqrt{n} \leq \tau_0 \sqrt{\frac{|\hat F_2|}{n\rho_2}}(\|\varphi_{\hat F_2}\| + \sigma)\label{algostop}}
when the algorithm stops. Now, \eqq{\|\varphi_{\hat F_2}\| \leq \sqrt{|\hat F_2|}\|\varphi - \hat \varphi_{ls}\|_{\infty} + \|\hat \varphi_{ls} - \hat \varphi\|.\label{residsignal}} To see this note that $\|\varphi_{\hat F_2}\|$ is bounded by the sum of $\|\varphi_{\hat F_2} - \hat \varphi_{ls,\,\hat F_2}\|$ and $\|\hat \varphi_{ls,\,\hat F_2}\|$, where $\hat\varphi_{ls,\,\hat F_2}$ is the sub-vector of $\hat\varphi_{ls}$ with indices in $\hat F_2$. The first term in the bound is at most $\sqrt{|\hat F_2|}\|\varphi - \hat \varphi_{ls}\|_{\infty}$, whereas the second term can be bounded by $\|\hat \varphi_{ls} - \hat \varphi\|$, since $\hat \varphi_j$ is zero for all indices $j$ in $\hat F_2$. Now, use the fact that  $\|\hat \varphi_{ls} - \varphi\|_{\infty}$ is bounded by $c_0\sigma\tau_0/\sqrt{n}$ along with the fact that $\|\fit_{\hat S} - \fit_S\|/\sqrt{n} \geq \sqrt{\lambda_{min}}\| \hat \varphi - \hat \varphi_{ls}\|$, to get that from (\ref{algostop}) and \eqref{residsignal} that,
\eqq{\|\varphi_{\hat F_2}\| \leq c_0\sigma\tau_0\sqrt{\frac{|\hat F_2|}{n}} + \tau_0\sqrt{r_1\frac{|\hat F_2|}{n}}(\|\varphi_{\hat F_2}\| + \sigma)\label{betnrmrem}}
when the algorithm stops. Here we use that $r_1 = 1/(\lambda_{min}\rho_2)$. One gets from (\ref{betnrmrem}) that
 \eqq{\left(1 - \tau_0\sqrt{\frac{r_1 |\hat F_2|}{n}}\right)\|\varphi_{\hat F_2}\| \leq \trt\sigma\tau_0\sqrt{|\hat F_2|}/\sqrt{n},\label{phihatfbdd}}
  where $\trt = c_0 + \sqrt{r_1}$ and $r_1 = 1/\rho$. Using $|\hat F_2| \leq k$, the term $ \tau_0\sqrt{r_1|\hat F_2|/n}$ appearing in the left side of the above can be bounded by $\tau_0\sqrt{r_1 k/n}$. This leads us to (\ref{stopineq}), which completes the proof of part (a).

For part (b),  notice that
\eqq{\|\hat\varphi - \varphi\| \leq \sqrt{k}\|\hat\varphi_{ls} - \varphi\|_\infty + \|\hat \varphi_{ls} - \hat\varphi\|\label{l2normtruncbdd}.}
Now use,
$$\|\hat \varphi_{ls} - \hat \varphi\| \leq \tau_0\sqrt{r_1k/n}(\|\varphi_{\hat F_2}\| + \sigma)$$
along with,
 \eqq{\|\varphi_{\hat F_2}\| \leq \frac{\trt\sigma\tau_0\sqrt{k/n}}{\left(1 - \tau_0\sqrt{r_1k/n}\right)},\label{phihatfbdd}}
to get, after rearranging, that,
$$\|\hat \varphi_{ls} - \hat \varphi\| \leq \sigma\tau_0\sqrt{r_1k/n}\,\frac{ (c_0\tau_0\sqrt{k/n} +1)}{1 - \tau_0\sqrt{r_1k/n}}.$$
Now use $\|\hat\varphi_{ls} - \varphi\|_\infty \leq \sigma c_0\tau_0\sqrt{k/n}$, along with $\trt = c_0 +\sqrt{r_1}$, to get from (\ref{l2normtruncbdd}) and the above that,
$$\|\hat\varphi - \varphi\| \leq  \frac{\trt\sigma\tau_0\sqrt{k/n}}{\left(1 - \tau_0\sqrt{r_1k/n}\right)}.$$
This completes the proof of the lemma.

\section{Proof of Lemma \ref{lemsimpsuffcond}}\label{pf:lemsimpsuffcond}
For a matrix $A \in \Rbb^{n\times m}$, and $a = 1$ or $\infty$, denote as $\ppp A \ppp_a = \sup_{v\neq 0} \|Av\|_a/\|v\|_a$.
Recall that $\ppp A\ppp_1$ is the maximum of the $\ell_1$ norms of the columns, whereas $\ppp A\ppp_\infty $ is the maximum of the $\ell_1$ norms of the rows.

We first prove part (i). We use \citet[Lemma 2]{cai2010orthogonal}, to get that $$1 - \gamma(k-1) \leq s_{min} \leq s_{max} \leq 1 + \gamma(k-1).$$
Now $\gamma \leq \maxcorr/(2k)$, since $k \leq \bar k$, and hence, the left side of the above is at least $1 - \maxcorr/2$ and the right side is at most $1 + \maxcorr/2$. Further, use  \citet[Theorem 3.5]{tropp2004greed}, to get that
$$\|\Sigma_{SS}^{-1}\Sigma_{Sj}\|_1 \leq \frac{\gamma k}{1 - \gamma(k-1)}.$$
The right side of the above is at most $\maxcorr$. Correspondingly, we may take $\maxcorrlone$ as $\maxcorr$.

We next prove part (ii). Use the fact that,
\eqq{\| \Sigma_{SS}^{-1}\Sigma_{SS^c}\beta_{S^c}\|_\infty \leq \ppp \Sigma_{SS}^{-1}\ppp_\infty \|\Sigma_{SS^c}\beta_{S^c}\|_\infty.\label{bddtkappa}}
Now as $\Sigma_{SS}^{-1}$ is symmetric, $\ppp \Sigma_{SS}^{-1}\ppp_\infty = \ppp \Sigma_{SS}^{-1}\ppp_1$; the latter is at most $1/(1 - \gamma(k-1))$ from  \cite[Theorem 3.5]{tropp2004greed}. Further, $\|\Sigma_{SS^c}\beta_{S^c}\|_\infty \leq \gamma\|\beta_{S^c}\|_1$, which is at most $\sigma\gamma\sprs\mu_n$. Correspondingly, from (\ref{bddtkappa}), one gets
\eqq{\|\Sigma_{SS}^{-1}\Sigma_{SS^c}\beta_{S^c}\|_\infty \leq \sigma\frac{\gamma \bar k}{1 - \gamma (k-1)}\bar\sprs\mu_n.\label{eq:ssusscbsc}}
The right of the above is at most $\sigma\maxcorr\bar\sprs\mu_n$, using the bound on $\gamma$.
Further,
\eqq{\|\Sigma_{S^c|S}\|_\infty \leq \|\Sigma_{S^cS^c}\beta_{S^c}\|_\infty +\|\Sigma_{S^cS}\Sigma_{SS}^{-1}\Sigma_{SS^c}\beta_{S^c}\|_\infty.\label{bddkappa}}
Now, $\|\Sigma_{S^cS^c}\beta_{S^c}\|_\infty \leq \|\beta_{S^c}\|_\infty + \|(\Sigma_{S^cS^c} - I)\beta_{S^c}\|_\infty.$ Further, use $\|\beta_{S^c}\|_\infty \leq \sigma\maxinf\mu_n$ and $\|(\Sigma_{S^cS^c} - I)\beta_{S^c}\|_\infty \leq \gamma\|\beta_{S^c}\|_1$, the right side of which is at most $\sigma\gamma \sprs\mu_n$. Also, the second term in (\ref{bddkappa}) can be bounded as follows:
$$\|\Sigma_{S^cS}\Sigma_{SS}^{-1}\Sigma_{SS^c}\beta_{S^c}\|_\infty \leq \ppp \Sigma_{S^cS}\ppp_\infty \|\Sigma_{SS}^{-1}\Sigma_{SS^c}\beta_{S^c}\|_\infty.$$
The first term in the right side product is bounded by $\gamma k$, whereas the second term, from \eqref{eq:ssusscbsc}, is bounded by $\sigma\maxcorr\bar\sprs\mu_n$. Correspondingly, one gets that
$$\|\Sigma_{S^c|S}\beta_{S^c}\|_\infty \leq \sigma\maxinf\mu_n + \sigma\gamma\sprs\mu_n + \sigma\gamma\maxcorr\sprs\mu_n.$$
Further, using $\gamma \sprs + \gamma \sprs \maxcorr \leq 2\gamma \sprs$, which is at most $\maxcorr\bar\sprs$, one gets the bound on
$\|\Sigma_{S^c|S}\beta_{S^c}\|_\infty$.

For $k = 0$, one has $\|\Sigma_{S^cS^c}\beta_{S^c}\|_\infty \leq \maxinf + \maxcorr\bar\sprs$, which is at most $\maxinf + \maxcorr\bar\sprs$, from the bound derived above. This completes the proof of the lemma.

\bibliographystyle{abbrv}
\bibliography{ogarandom0819}

\end{document}